\documentclass{article}

\usepackage{PRIMEarxiv}
\usepackage{natbib}
\usepackage{algorithm}
\usepackage{algorithmic}

\usepackage{microtype}
\usepackage{graphicx}
\usepackage{subfigure}
\usepackage{booktabs} %
\usepackage{multirow}
\usepackage{tablefootnote}

\usepackage{hyperref}

\usepackage{pifont}%
\newcommand{\cmark}{\ding{51}}%
\usepackage{amsmath}
\usepackage{amssymb}
\usepackage{mathtools}
\usepackage{amsthm}
\usepackage{thm-restate}
\usepackage{mathabx}

\DeclareMathOperator{\CDF}{CDF}
\newcommand{\AvCDF}{\overline{\CDF}}

\DeclareMathOperator*{\esssup}{ess\,sup}
\DeclareMathOperator{\uboracle}{\Xi}
\DeclareMathOperator{\lboracle}{\Lambda}
\newcommand{\suffstat}{\Psi}

\newcommand{\PP}{\mathbb{P}}
\newcommand{\RR}{\mathbb{R}}
\newcommand{\NN}{\mathbb{N}}

\newcommand{\infseqt}[1]{(#1)_{t=1}^\infty}

\usepackage{float}
\usepackage{nicefrac}

\usepackage[capitalize,noabbrev]{cleveref}

\theoremstyle{plain}
\newtheorem{theorem}{Theorem}[section]

\newtheorem{lemma}[theorem]{Lemma}

\theoremstyle{definition}
\newtheorem{definition}[theorem]{Definition}

\theoremstyle{remark}

\usepackage[textsize=tiny]{todonotes}

\usepackage{caption}

\newcommand{\repeatcaption}[2]{%
  \renewcommand{\thefigure}{\ref{#1}}%
  \captionsetup{list=no}%
  \caption{#2}%
  \addtocounter{figure}{-1}%
}

\usepackage{transparent}
\newcommand{\IIF}[1]{\STATE\algorithmicif\ #1\ \algorithmicthen}
\newcommand{\ENDIIF}{\unskip\ \algorithmicend\ \algorithmicif}
\newcommand{\algcommentlight}[1]{\textcolor{blue!70!black}{\transparent{0.5}\small{\textbf{//\hspace{2pt}#1}}}}

\newcommand{\pythonmethod}[1]{\texttt{#1}}

\newcommand{\megaurl}{\url{https://github.com/microsoft/csrobust}}

\pagestyle{fancy}
\thispagestyle{empty}
\rhead{ \textit{ }}

\fancyhead[LO]{Nonstationary CDF Estimation}

\title{Time-uniform confidence bands for the CDF under nonstationarity}

\author{
  Paul Mineiro \\
  Microsoft Research \\
  \texttt{pmineiro@microsoft.com}
  \And
  Steve Howard \\
  LinkedIn \\
  \texttt{steve@stevehoward.org}
}

\graphicspath{{icml2023/}}

\begin{document}

\maketitle

\vspace{-24pt}

\begin{abstract}
Estimation of the complete distribution of a random variable is
a useful primitive for both manual and automated decision making.
This problem has received extensive attention in the i.i.d. setting,
but the arbitrary data dependent setting remains largely unaddressed.
Consistent with known impossibility results, we present computationally
felicitous time-uniform and value-uniform bounds on the CDF of the running
averaged conditional distribution of a real-valued random variable which
are always valid and sometimes trivial, along with an instance-dependent
convergence guarantee.  The importance-weighted extension is appropriate for
estimating complete counterfactual distributions of rewards given controlled
experimentation data exhaust, e.g., from an A/B test or a contextual bandit.
\end{abstract}

\section{Introduction}
\label{sec:introduction}

What would have happened if I had acted differently?  Although this
question is as old as time itself, successful companies have recently
embraced this question via counterfactual estimation of outcomes from
the exhaust of their controlled experimentation platforms, e.g., based
upon A/B testing or contextual bandits. These experiments are run in
the real (digital) world, which is rich enough to demand statistical
techniques that are non-asymptotic, non-parametric, and non-stationary.
Although recent advances admit characterizing counterfactual average
outcomes in this general setting, counterfactually estimating a
complete distribution of outcomes is heretofore only possible with
additional assumptions.  Nonethless, the practical importance of
this problem has motivated multiple solutions:
see \cref{tab:comparepriorart} for a summary, and \cref{sec:relatedwork}
for complete discussion.

Intriguingly, this problem is provably impossible in the data dependent
setting without additional assumptions.~\cite{rakhlin2015sequential}
Consequently, our bounds always achieve non-asymptotic coverage, but may
converge to zero width slowly or not at all, depending on the hardness of
the instance. We call this design principle
AVAST (\underline{A}lways \underline{V}alid \underline{A}nd
\underline{S}ometimes \underline{T}rivial).

In pursuit of our ultimate goal, we derive factual distribution estimators
which are useful for estimating the complete distribution
of outcomes from direct experience.

\paragraph{Contributions}

\begin{enumerate}
\item In \cref{subsec:unitinterval} we provide a time and value
uniform upper bound on the CDF of the averaged historical conditional
distribution of a discrete-time real-valued random process.  Consistent
with the lack of sequential uniform convergence of linear threshold
functions~\citep{rakhlin2015sequential}, the bounds are always valid
and sometimes trivial, but with an instance-dependent guarantee: when
the data generating process is smooth qua \citet{block2022smoothed}
with respect to the uniform distribution on the unit interval, the bound
width adapts to the unknown smoothness parameter.
\item In \cref{subsec:extensions} we extend the previous technique to
distributions with support over the entire real line, and further to
distributions with a known countably infinite or unknown nowhere dense set of discrete jumps;
with analogous instance-dependent guarantees.
\item In \cref{subsec:importanceweighted} we extend the previous techniques to importance-weighted random variables, achieving our ultimate goal of
estimating a complete counterfactual distribution of outcomes.
\end{enumerate}

We exhibit our techniques in various simulations in \cref{sec:simulations}.
Computationally our procedures have comparable cost to point estimation
of the empirical CDF, as the empirical CDF is a sufficient statistic.

\defcitealias{chandak2021universal}{UnO21}
\defcitealias{waudby2022anytime}{WS22}
\defcitealias{huang2021off}{HLLA21}
\defcitealias{howard2022sequential}{HR22}
\begin{table*}[t]
\caption{Comparison to prior art for CDF estimation.  See \cref{sec:relatedwork} for details.}
\label{tab:comparepriorart}
\vskip 0.15in
\begin{minipage}{\textwidth}
\begin{center}
\begin{small}
\begin{sc}
\begin{tabular}{lccccccccr}
\toprule
Reference &
\multicolumn{1}{|p{1.5cm}|}{\centering Quantile-\\ Uniform?} &
\multicolumn{1}{|p{1.4cm}|}{\centering Time-\\ Uniform?} &
\multicolumn{1}{|p{1.7cm}|}{\centering Non-\\ stationary?} &
\multicolumn{1}{|p{1.75cm}|}{\centering Non-\\ asymptotic?} &
\multicolumn{1}{|p{1.75cm}|}{\centering Non-\\ parametric?} &
\multicolumn{1}{|p{1.75cm}|}{\centering Counter-\\ factual?} &
\multicolumn{1}{|p{1.2cm}|}{\centering $w_{\max}$-\\ free?\footnote{$w_{\max}$ free techniques are valid with unbounded importance weights.}} \\
\midrule
\citetalias{howard2022sequential} & \cmark & \cmark &  & \cmark & \cmark &  & N/A \\
\citetalias{huang2021off} & \cmark &  &  & \cmark & \cmark & \cmark &  \\
\citetalias[\citetext{IID}]{chandak2021universal} & \cmark &  &  & \cmark & \cmark & \cmark & \cmark \\
\citetalias[\citetext{NS}]{chandak2021universal} & \cmark &  & \cmark &  &  & \cmark & \cmark \\
\citetalias[\citetext{\S4}]{waudby2022anytime} & \cmark & \cmark &  & \cmark & \cmark & \cmark & \cmark \\
this paper & \cmark & \cmark & \cmark & \cmark & \cmark & \cmark & \cmark \\
\bottomrule
\end{tabular}
\end{sc}
\end{small}
\end{center}
\end{minipage}
\vskip -0.1in
\end{table*}

\section{Problem Setting}
\label{sec:setting}

Let
$\left(\Omega, \mathcal{F}, \left\{ \mathcal{F}_t \right\}_{t \in \mathbb{N}}, \mathbb{P}\right)$
be a probability space equipped with a discrete-time filtration, on
which let $X_t$ be an adapted real valued random process.  Let
$\mathbb{E}_t\left[\cdot\right] \doteq \mathbb{E}\left[\cdot | \mathcal{F}_t\right]$.
The quantity of interest is
\begin{equation}
\AvCDF_t(v) \doteq \frac{1}{t} \sum_{s \leq t} \mathbb{E}_{s-1}\left[1_{X_s \leq v}\right],
\label{eqn:defcdf}
\end{equation}
i.e., the CDF of the averaged historical conditional distribution.
We desire simultaneously time and value uniform bounds which hold with
high probability, i.e., adapted sequences of maps $L_t, U_t: \mathbb{R} 
\to [0,1]$ satisfying
\begin{equation}
\mathbb{P}\left( \substack{\forall t \in \mathbb{N} \\ \forall v \in \mathbb{R}}: L_t(v) \leq \overline{\CDF}_t(v) \leq U_t(v) \right) \geq 1 - 2\alpha.
\label{eqn:overallguarantee}
\end{equation}

In the i.i.d. setting, \cref{eqn:defcdf} is independent of $t$
and reduces to the CDF of the (unknown) generating distribution.
In this setting, the classic results of \citet{glivenko1933sulla} and
\citet{cantelli1933sulla} established uniform convergence of linear
threshold functions; subsequently the Dvoretzky-Kiefer-Wolfowitz (DKW) inequality characterized fixed time and value
uniform convergence rates~\cite{dvoretzky1956asymptotic,massart1990tight};
extended later to simultaneously time and value uniform
bounds~\cite{howard2022sequential}. The latter result guarantees an
$O(t^{-1} \log(\log(t)))$ confidence interval width, matching the limit
imposed by the Law of the Iterated Logarithm.

\paragraph{AVAST principle} In contrast, under arbitrary data
dependence, linear threshold functions are not sequentially
uniformly convergent, i.e., the averaged historical empirical CDF
does not necessarily converge uniformly to the CDF of the averaged historical conditional
distribution.~\cite{rakhlin2015sequential}  Consequently, additional
assumptions are required to provide a width guarantee.  In this
paper we design bounds that are \underline{A}lways \underline{V}alid
\underline{A}nd \underline{S}ometimes \underline{T}rivial, i.e., under
worst-case data generation
$\forall t, \exists v: U_t(v) - L_t(v) = O(1)$.  Fortunately our bounds
are also equipped with an instance-dependent width guarantee dependent
upon the smoothness of the distribution to a reference measure qua
\cref{def:smooth}.

\paragraph{Additional Notation}  Let $X_{a:b} = \{ X_s \}_{s=a}^b$
denote a contiguous subsequence of a random process.  Let $\mathbb{P}_t$
denote the average historical conditional distribution, defined as a (random) distribution over the sample space $\mathbb{R}$ by
$\mathbb{P}_t(A) \doteq t^{-1} \sum_{s \leq t} \mathbb{E}_{s-1}\left[1_{X_s \in A}\right]$ for a Borel subset $A$.

\begin{figure*}[t!]
\vskip 0.05in
\begin{minipage}[t]{.49\textwidth}
  \vskip 0pt
  \centering
  \includegraphics[width=1.2\linewidth]{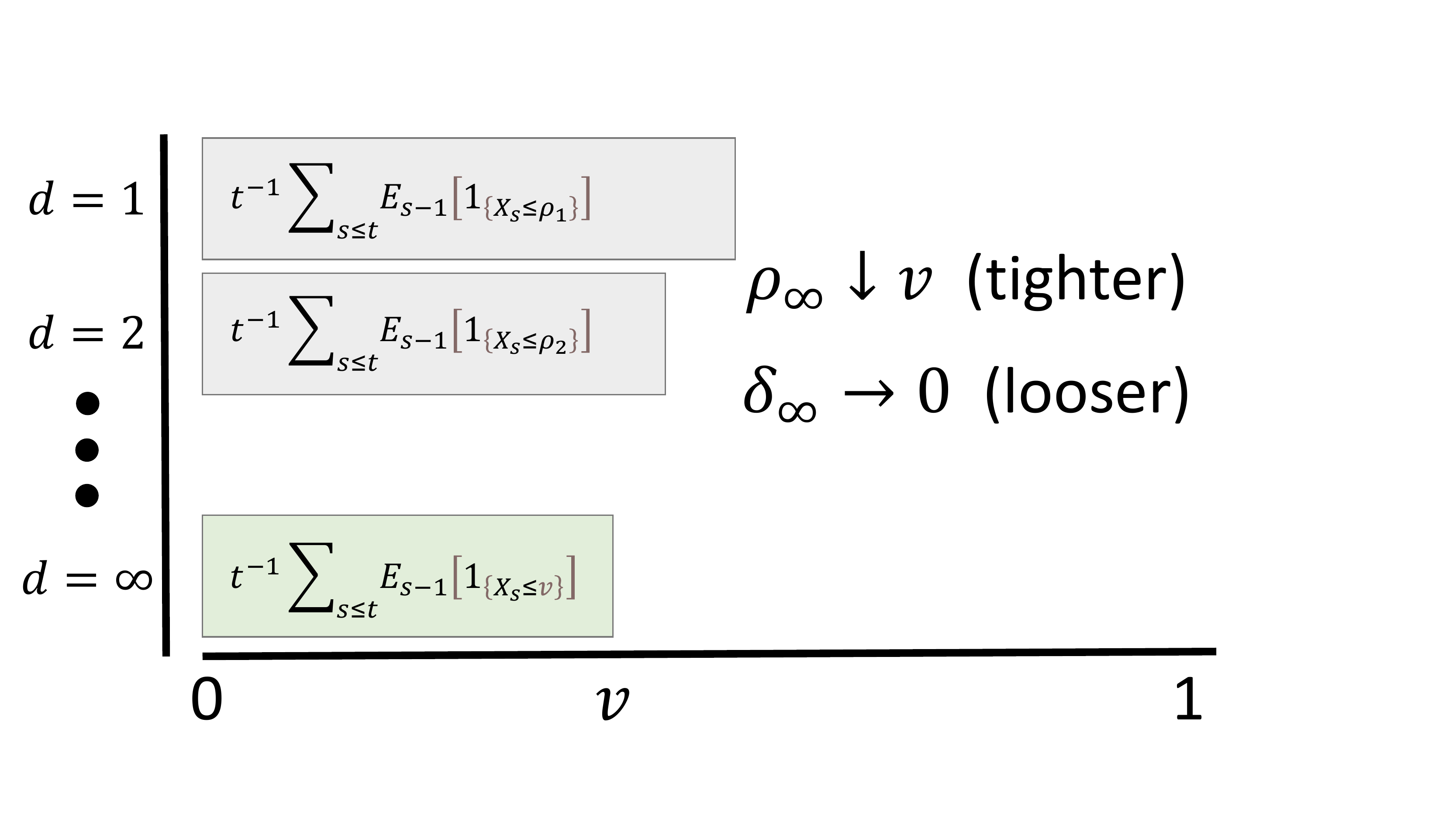}
  \vskip -12pt
  \caption{Visualization of \cref{alg:ubunionzeroone}.  The values of
  interest $v$ are uncountably infinite; the algorithm allocates probability
  $\delta$ to maintain upper bounds on a countably infinite set of points
  $\rho$ at different resolution levels $d$; and leverages the monotonicity
  of $\AvCDF_t(v)$.  The algorithm searches over all $d$ to optimize
  the overall bound via a provably correct early termination criterion.}
  \label{fig:algopic}
\end{minipage}
\hfill
\begin{minipage}[t]{.49\textwidth}
  \begin{algorithm}[H]
     \caption{Unit Interval Upper Bound.  $\epsilon(d)$ is an increasing function specifying the resolution of discretization at level $d$.  $\uboracle_t\left(\rho; \delta, d, \suffstat_t\right)$ is an upper confidence sequence for fixed value $\rho$ with coverage at least $\left(1 - \delta\right)$.}
     \label{alg:ubunionzeroone}
  \begin{algorithmic}
     \STATE {\bfseries Input:} value $v$; confidence $\alpha$; sufficient statistic $\suffstat_t$.
     \STATE \algcommentlight{e.g. $\suffstat_t \doteq X_{1:t}$ or $\suffstat_t \doteq (W_{1:t}, X_{1:t})$}
     \STATE {\bfseries Output:} $U_t(v)$ satisfying \cref{eqn:overallguarantee}.
     \IIF{$v > 1$} \textbf{return} 1 \ENDIIF
     \STATE $u \leftarrow 1$
     \STATE $v \leftarrow \max\left(0, v\right)$
     \FOR{$d=1$ {\bfseries to} $\infty$}
     \STATE $\rho_d \leftarrow \epsilon(d) \lceil \epsilon(d)^{-1} v \rceil$
     \STATE $\delta_d \leftarrow \nicefrac{\alpha}{2^d \epsilon(d)}$
     \STATE $u \leftarrow \min\left(u, \uboracle_t\left(\rho_d; \delta_d, \suffstat_t\right)\right)$
     \IF{$0 = \sum_{s \leq t} 1_{X_s \in \left(v, \rho_d\right]}$}
        \RETURN $u$
     \ENDIF
     \ENDFOR
  \end{algorithmic}
  \end{algorithm}
\end{minipage}
\end{figure*}

\section{Derivations}
\label{sec:derivations}

\subsection{Bounds for the Unit Interval}
\label{subsec:unitinterval}

\paragraph{Fixed $v$} Initially we consider a fixed value $v \in
\mathbb{R}$. Let
$\theta_s \doteq \mathbb{E}_{s-1}\left[1_{X_s \leq v}\right]$
;
$\hat{q}_t \doteq t^{-1} \sum_{s \leq t} 1_{X_s \leq v}$
; and
$q_t \doteq t^{-1} \sum_{s \leq t} \theta_s = \AvCDF_t(v)$.
Our goal is to uniformly bound the deviations of the martingale $S_t = t(\hat{q}_t - q_t)$, quantifying how far the estimand $\AvCDF_t(v)$ lies from the empirical estimate $\hat{q}_t$. We bound these deviations using the following nonnegative martingale,
\newcommand{\eqnbinmart}{E_t(\lambda) \doteq \exp\left(\lambda S_t - \sum_{s \leq t} \log\left(h(\lambda, \theta_s)\right) \right),}
\begin{equation}
\eqnbinmart
\label{eqn:binmart}
\end{equation}
where
$\lambda \in \mathbb{R}$ is fixed
and
$h(\lambda, z) \doteq (1 - z) e^{-\lambda z} + z e^{\lambda (1 - z)}$, the moment-generating function of a centered Bernoulli($z)$ random variable. \cref{eqn:binmart} is a test martingale qua \citet{shafer2011test},
i.e., it can be used to construct time-uniform bounds on $\hat{q}_t - q_t$
via Ville's inequality.  This topic has received extensive
treatment in the literature: we provide a self-contained overview in
\cref{app:confseqreview}.  For now, we will simply posit the existence
of fixed-value confidence sequences $\lboracle_t\left(v; \delta, \suffstat_t\right)$ and
$\uboracle_t\left(v; \delta, \suffstat_t\right)$ based on a sufficient statistic $\suffstat_t$ which, for a fixed
value $v$, achieve
$\mathrm{P}\left(\forall t: \lboracle_t\left(v; \delta, \suffstat_t\right) \leq \AvCDF_t(v) \leq \uboracle_t\left(v; \delta, \suffstat_t\right)\right) \geq 1 - \delta$.
Our goal is to compose these fixed value bounds into a value-uniform
bound, which we achieve via union-bounding over a discretization.

\paragraph{Countably Infinite Union}
In the i.i.d. setting, $q_t$ is independent of $t$, and each $q$ can be
associated with a largest value
$v^{(+)}(q) \doteq \sup\left\{ v | \mathbb{E}\left[1_{X \leq v}\right] \leq q \right\}$;
therefore the upper bound can be evaluated at fixed $q$, i.e.,
$\uboracle_t(q; \delta, \suffstat_t) \doteq \uboracle_t\left(v^{(+)}(q); \delta, \suffstat_t\right)$, to search
over $v$ qua \citet{howard2022sequential}; and analogously for the
lower bound.  This ``quantile space'' approach has advantages, e.g.,
variance based discretization and covariance to monotonic transformations.
Unfortunately, under arbitrary data dependence, $q_t$ changes with time
and \cref{eqn:binmart} does not admit the same strategy, so we proceed
by operating in ``value space''.  See \cref{app:whynotquantilespace}
for more details.

\cref{alg:ubunionzeroone}, visualized in \cref{fig:algopic}, constructs
an upper bound on \cref{eqn:defcdf} which, while valid for all values, is
designed for random variables ranging over the unit interval.  It refines
an upper bound on the value $\rho \geq v$ and exploits monotonicity of
$\AvCDF_t(v)$. A union bound over the (countably infinite) possible
choices for $\rho$ controls the coverage of the overall procedure.
Because the error probability $\delta$ decreases with resolution (and the 
fixed-value confidence radius $\uboracle_t$ increases as $\delta$ decreases), the
procedure can terminate whenever no empirical counts remain between the
desired value $v$ and the current upper bound $\rho$, as all subsequent 
bounds are dominated.

The lower bound is derived analogously in \cref{alg:lbunionzeroone} (which we have left to \cref{app:lowerboundzeroone} for the sake of brevity) and leverages a lower confidence
sequence $\lboracle_t\left(\rho; \delta, \suffstat_t\right)$ (instead of an upper confidence sequence) evaluated at an
increasingly refined lower bound on the value $\rho \leftarrow \epsilon(d)
\lfloor \epsilon(d)^{-1} v \rfloor$.

\begin{theorem}\label{thm:coverage}
If $\epsilon(d) \uparrow \infty$ as $d \uparrow \infty$, then \cref{alg:ubunionzeroone,alg:lbunionzeroone} terminate with probability one. Furthermore, if for all $\rho$, $\delta$, and $d$ the algorithms $\lboracle_t(\rho; \delta, \suffstat_t)$ and $\uboracle_t(\rho; \delta, \suffstat_t)$ satisfy
\begin{align}
P(\forall t: \AvCDF_t(v) \geq \lboracle_t(\rho; \delta, \suffstat_t))
&\geq 1 - \delta, \\
P(\forall t: \AvCDF_t(v) \leq \uboracle_t(\rho; \delta, \suffstat_t))
&\geq 1 - \delta,
\end{align}
then guarantee \eqref{eqn:overallguarantee} holds with $U_t,L_t$ given by the outputs of \cref{alg:ubunionzeroone,alg:lbunionzeroone}, respectively.
\end{theorem}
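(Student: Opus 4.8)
The statement bundles two claims — almost-sure termination and coverage — and I would prove termination first, since the maps $U_t,L_t$ are only defined once the loops halt; after that, the only probabilistic input is the pair of per-value coverage hypotheses on $\uboracle_t,\lboracle_t$, and everything else is deterministic bookkeeping about the discretization together with monotonicity of $v\mapsto\AvCDF_t(v)$. For termination, fix a sample path and a horizon $t$, so that $X_{1:t}$ is a fixed finite subset of $\RR$. In \cref{alg:ubunionzeroone}, after the early return for $v>1$ and the clamp $v\leftarrow\max(0,v)$, the grid point $\rho_d$ rounds $v$ up to the level-$d$ grid, so $\rho_d\ge v$; and since $\epsilon(d)\uparrow\infty$, the mesh of that grid vanishes and hence $\rho_d\to v$. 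If no $X_s$ exceeds $v$ then the test $\sum_{s\le t}1_{X_s\in(v,\rho_d]}=0$ already holds at $d=1$; otherwise, writing $v^*\doteq\min\{X_s:s\le t,\ X_s>v\}>v$, we have $\rho_d<v^*$ for all large $d$, so $(v,\rho_d]$ contains none of the finitely many $X_s$ and the test succeeds. Either way the loop exits after finitely many iterations on every path, a fortiori with probability one; \cref{alg:lbunionzeroone} is symmetric with downward rounding.

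For coverage, let $\mathcal{G}_d$ be the finite set of level-$d$ grid points that $\rho_d$ can equal for some admissible $v$ — finite because the algorithm confines $v$ to $[0,1]$, and $|\mathcal{G}_d|=O(\epsilon(d))$. For each fixed $\rho\in\mathcal{G}_d$ the hypothesis says the fixed-value upper confidence sequence fails, i.e.\ $\AvCDF_t(\rho)>\uboracle_t(\rho;\delta_d,\suffstat_t)$ for some $t$, with probability at most $\delta_d=\alpha/(2^d\epsilon(d))$. A union bound over all pairs $(d,\rho)$ then shows that the event
\[
G_U\doteq\bigl\{\,\forall d\in\NN,\ \forall\rho\in\mathcal{G}_d,\ \forall t:\ \AvCDF_t(\rho)\le\uboracle_t(\rho;\delta_d,\suffstat_t)\,\bigr\}
\]
satisfies $\PP(G_U^c)\le\sum_{d\ge1}|\mathcal{G}_d|\,\delta_d$, and the geometric factor $2^{-d}$ in $\delta_d$ is precisely what keeps this sum at most $\alpha$ (with the $O(1)$ boundary grid points, e.g.\ $\rho\in\{0,1\}$, absorbed into the indexing). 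By the symmetric construction, the analogous event $G_L$ for \cref{alg:lbunionzeroone} has $\PP(G_L^c)\le\alpha$, so $\PP(G_U\cap G_L)\ge1-2\alpha$.

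It then remains to check that $G_U\cap G_L$ is contained in the event appearing in \eqref{eqn:overallguarantee}. Fix $t$ and $v$. For the upper bound: if $v>1$ then $U_t(v)=1\ge\AvCDF_t(v)$; otherwise the algorithm halts at some path-dependent level $d^*$ and returns $U_t(v)=\min_{1\le d\le d^*}\uboracle_t(\rho_d;\delta_d,\suffstat_t)$ (also capped at $1$). Each $\rho_d\ge\max(0,v)\ge v$, so monotonicity of $\AvCDF_t$ and $\rho_d\in\mathcal{G}_d$ give, on $G_U$, $\AvCDF_t(v)\le\AvCDF_t(\rho_d)\le\uboracle_t(\rho_d;\delta_d,\suffstat_t)$; taking the minimum over $d$ yields $\AvCDF_t(v)\le U_t(v)$. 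Note that the termination test is never invoked in this argument — it only affects tightness — and, because $G_U$ controls every grid point at every level simultaneously, the data-dependence of $d^*$ is harmless. The inequality $L_t(v)\le\AvCDF_t(v)$ follows identically using $G_L$, downward rounding ($\rho_d\le v$), and monotonicity, and intersecting the two gives \eqref{eqn:overallguarantee} with probability at least $1-2\alpha$. The conceptual core is short — monotonicity reduces the value-uniform claim to the per-value coverage of $\uboracle_t,\lboracle_t$ — so the main obstacles are bookkeeping ones: (i) the termination argument, where grids at different levels need not be nested, so $\rho_d$ need not decrease in $d$ and one must argue via $\rho_d\to v$ together with finiteness of $X_{1:t}$; and (ii) controlling $|\mathcal{G}_d|$ and the boundary grid points carefully enough that $\sum_{d\ge1}|\mathcal{G}_d|\,\delta_d$ is genuinely at most $\alpha$ rather than merely $O(\alpha)$.
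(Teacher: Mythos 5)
Your proof takes essentially the same route as the paper's: almost-sure termination from $\rho_d \to v$ and finiteness of $X_{1:t}$, then a union bound over the countable grid $\{(d,\rho)\}$ with error budget $\delta_d = \alpha/(2^d\epsilon(d))$ at each of the $\epsilon(d)$ level-$d$ points, then monotonicity of $\AvCDF_t$ to transfer pointwise coverage at $\rho_d$ to the query $v$, and a final union over the upper and lower halves. In fact your termination argument is a bit more careful than the paper's: the paper asserts $\rho_d \downarrow v$, which presumes nested grids and fails for general $\epsilon(d)\uparrow\infty$ (e.g.\ $\epsilon(d)=d+1$), whereas your argument via non-monotone convergence $\rho_d\to v$ together with the gap to the nearest data point $v^*>v$ is the correct one for the theorem as stated.
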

\begin{proof}
See \cref{app:proofthmcoverage}.
\end{proof}

\Cref{thm:coverage} ensures \cref{alg:ubunionzeroone,alg:lbunionzeroone} yield the desired time- and value-uniform coverage, essentially due to the union bound and the
coverage guarantees of the oracles $\uboracle_t,\lboracle_t$.  However, coverage
is also guaranteed by the trivial bounds $0 \leq \AvCDF_t(v) \leq 1$.
The critical question is: what is the bound width?

\paragraph{Smoothed Regret Guarantee}
Even assuming $X$ is entirely supported on the unit interval, on
what distributions will \cref{alg:ubunionzeroone} provide a non-trivial
bound?  Because each
$\left[\lboracle_t(\rho; \delta, \suffstat_t), \uboracle_t(\rho; \delta, \suffstat_t)\right]$
is a confidence sequence for the mean of the bounded random variable
$1_{X_s \leq \rho}$, we enjoy width guarantees at each of the (countably
infinite) $\rho$ which are covered by the union bound, but the guarantee
degrades as the depth $d$ increases.  If the data generating process
focuses on an increasingly small part of the unit interval over time, the
width guarantees on our discretization will be insufficient to determine
the distribution.  Indeed, explicit constructions demonstrating the
lack of sequential uniform convergence of linear threshold functions
increasingly focus in this manner.~\cite{block2022smoothed}

Conversely, if $\forall t: \AvCDF_t(v)$ was Lipschitz continuous in
$v$, then our increasingly granular discretization would eventually
overwhelm any fixed Lipschitz constant and guarantee uniform convergence.
\cref{thm:smoothedregretzeroone} expresses this intuition, but using the
concept of smoothness rather than Lipschitz, as the former concept will
allow us to generalize further.

\begin{definition}
\label{def:smooth}
A distribution $D$ is $\xi$-smooth wrt reference measure $M$ if $D \ll M$ and $\esssup_M \left(\nicefrac{dD}{dM}\right) \leq \xi^{-1}$.
\end{definition}

\newcommand{\betacurvescaption}{CDF bounds approaching the true CDF when sampling i.i.d.\ from a Beta(6,3) distribution.  Note these bounds are simultaneously valid for all times and values.}
\newcommand{\polyacurvescaption}{Nonstationary Polya simulation for two seeds approaching different average conditional CDFs.  Bounds successfully track the true CDFs in both cases.  See \cref{exp:nonstationary}.}
\begin{figure*}[t]
\centering
\begin{minipage}[t]{.49\textwidth}
  \vskip 0pt
  \centering
  \includegraphics[width=.96\linewidth]{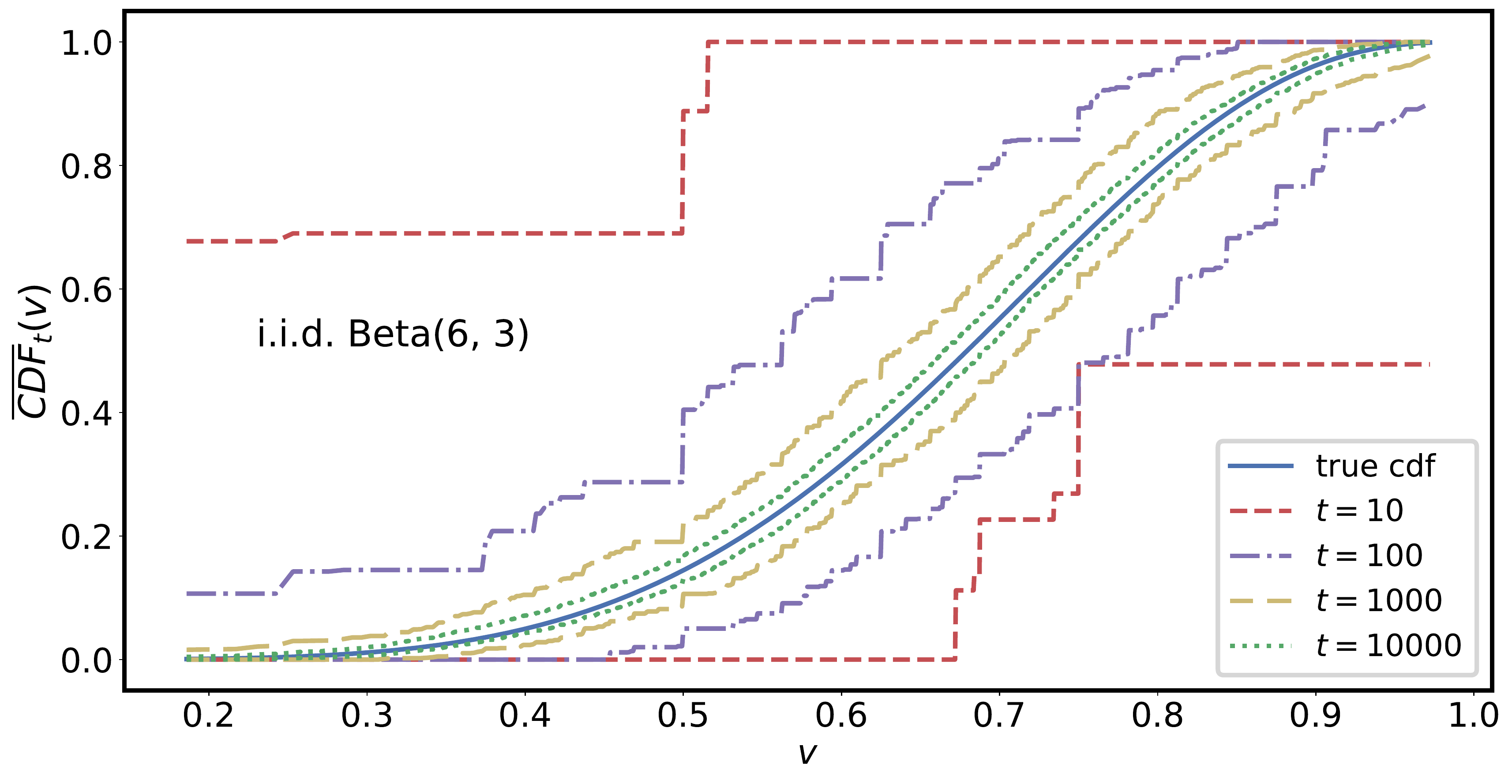}
  \vskip -12pt
  \caption{\betacurvescaption}
  \label{fig:betacurves}
\end{minipage}
\hfill
\begin{minipage}[t]{.49\textwidth}
  \vskip 0pt
  \centering
  \includegraphics[width=.96\linewidth]{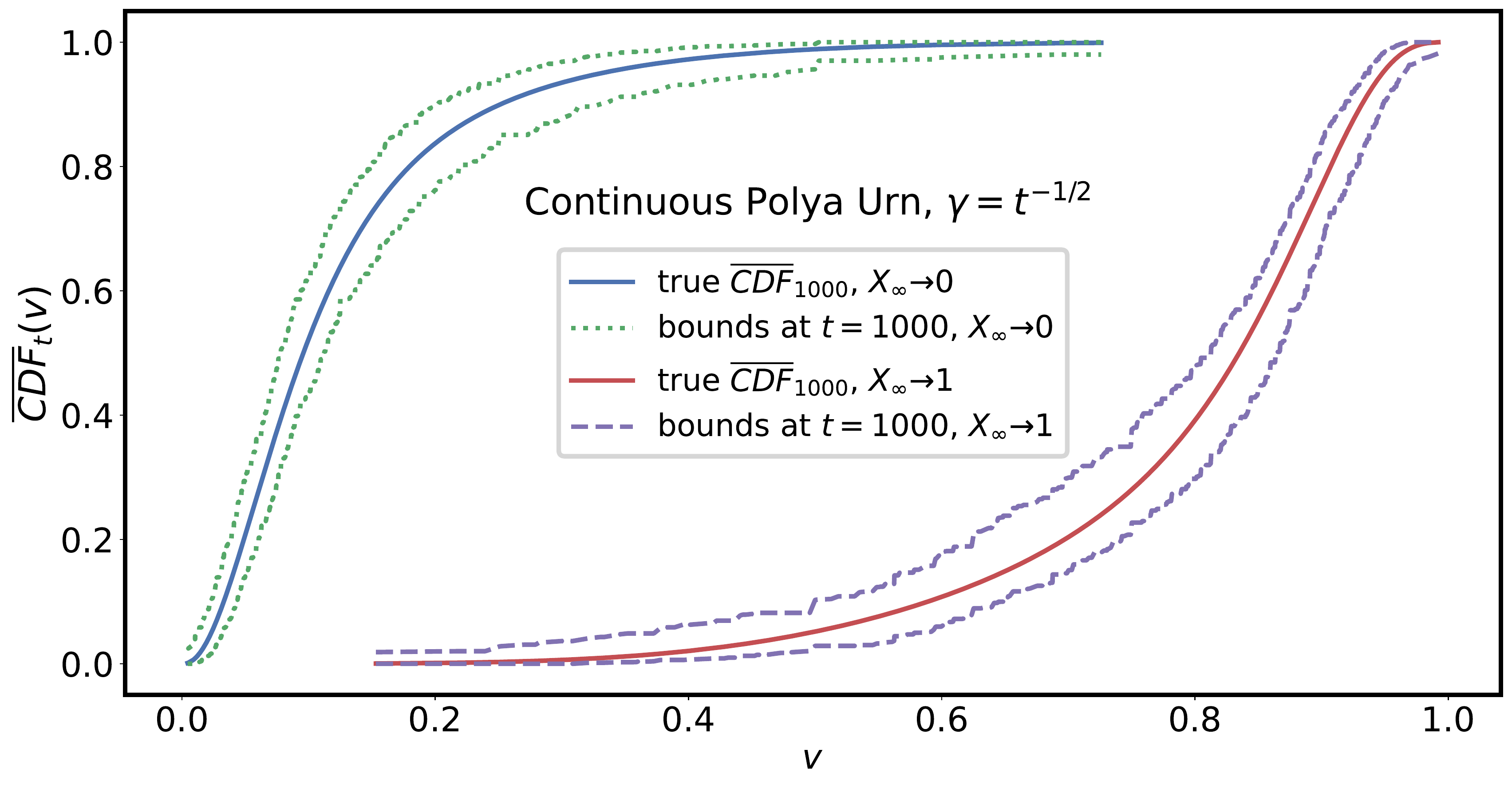}
  \vskip -12pt
  \caption{\polyacurvescaption}
  \label{fig:contpolyatwoseeds}
\end{minipage}
\vskip -0.2in
\end{figure*}

When the reference measure is the uniform distribution on the unit
interval, $\xi$-smoothness implies an $\xi^{-1}$-Lipschitz CDF.  However,
when the reference measure has its own curvature, or charges points,
the concepts diverge.  In this case the reference measure is a probability measure, therefore $\xi \leq 1$.  Note that as $\xi \to 0$, the smoothness constraint
is increasingly relaxed.  As \cref{thm:smoothedregretzeroone} indicates,
when the smoothness constraint relaxes, convergence is slowed.
\begin{restatable}{theorem}{thmSmooth}
\label{thm:smoothedregretzeroone}
Let $U_t(v)$ and $L_t(v)$ be the upper and lower bounds returned
by \cref{alg:ubunionzeroone} and \cref{alg:lbunionzeroone}
respectively, when evaluated with $\epsilon(d) = 2^d$ and the confidence sequences $\lboracle_t$ and $\uboracle_t$
of \cref{eqn:binnormalcurved}.  If $\forall t: \mathbb{P}_t$ is
$\xi_t$-smooth wrt the uniform distribution on the unit interval then
\begin{equation}
\begin{split}
&\forall t, \forall v: U_t(v) - L_t(v) \leq \\
&\qquad \sqrt{\frac{V_t}{t}} + \tilde{O}\left(\sqrt{\frac{V_t}{t} \log\left(\xi_t^{-2} \alpha^{-1} t^{3/2}\right)}\right),
\end{split}
\label{eqn:mainthm}
\end{equation}
where $q_t \doteq \AvCDF_t(v)$;
$V_t \doteq \nicefrac{1}{t} + \nicefrac{(q_t - 1/2)}{\log\left(\nicefrac{q_t}{1-q_t}\right)}$;
and $\tilde{O}()$ elides polylog $V_t$ factors.
\end{restatable}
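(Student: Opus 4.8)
The plan is to bound the band width at an arbitrary fixed value $v$ by the width of a \emph{single} fixed-value confidence interval, evaluated at a discretization level we are free to choose, and then pick that level so that the discretization error forced by smoothness is balanced against the geometrically shrinking confidence budget.

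\textbf{Step 1 (reduction to one level).} At level $d$ the algorithms use the grid of spacing $2^{-d}$; write $\rho_d = 2^{-d}\lceil 2^d v\rceil \ge v$ and $\sigma_d = 2^{-d}\lfloor 2^d v\rfloor \le v$, so $\rho_d - \sigma_d \le 2^{-d}$, and $\delta_d = \alpha/(2^d\epsilon(d)) = \alpha 4^{-d}$. By the domination/early-termination argument preceding the theorem, together with the almost-sure termination from \cref{thm:coverage}, the outputs satisfy
\[ U_t(v) = \min_{d\ge 1}\uboracle_t(\rho_d;\delta_d,\suffstat_t), \qquad L_t(v) = \max_{d\ge 1}\lboracle_t(\sigma_d;\delta_d,\suffstat_t), \]
so for \emph{every} $d$ we have $U_t(v) - L_t(v) \le \uboracle_t(\rho_d;\delta_d,\suffstat_t) - \lboracle_t(\sigma_d;\delta_d,\suffstat_t)$; we will ultimately take $d = d_t$ depending on the realized $t$, $\xi_t$, $q_t$.

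\textbf{Step 2 (smoothness and coverage).} Work on the event $\mathcal G$ that $\lboracle_t(\rho;\delta_d,\suffstat_t)\le\AvCDF_t(\rho)\le\uboracle_t(\rho;\delta_d,\suffstat_t)$ for all $t,d\in\NN$ and all grid points $\rho$ (and the algorithms terminate); the union bound in the proof of \cref{thm:coverage} gives $\PP(\mathcal G)\ge 1-2\alpha$, which is the sense in which \cref{eqn:mainthm} holds. On $\mathcal G$, monotonicity of $\AvCDF_t$ and $\xi_t$-smoothness (a sub-interval of $[0,1]$ of length $\le 2^{-d}$ has $\PP_t$-mass $\le\xi_t^{-1}2^{-d}$, by \cref{def:smooth}) give $|\AvCDF_t(\rho_d)-q_t|\le\xi_t^{-1}2^{-d}$ and $|\AvCDF_t(\sigma_d)-q_t|\le\xi_t^{-1}2^{-d}$; moreover two-sided coverage at each grid point $\rho$ places $\hat{q}_t(\rho)$ within the interval's confidence radius of $\AvCDF_t(\rho)$. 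Writing $W_t(q,\delta)$ for the width of the $\delta$-interval as a function of the sufficient statistic $\hat{q}_t=q$, these facts combine to
\[ U_t(v)-L_t(v) \;\le\; 2\,\xi_t^{-1}2^{-d} \;+\; W_t(\hat{q}_t(\rho_d),\delta_d) \;+\; W_t(\hat{q}_t(\sigma_d),\delta_d), \]
with $\hat{q}_t(\rho_d)$ and $\hat{q}_t(\sigma_d)$ each within $\xi_t^{-1}2^{-d}+\sup_q W_t(q,\delta_d)$ of $q_t$.

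\textbf{Step 3 (fixed-value width and transfer).} Invoke the explicit width bound for the confidence sequences of \cref{eqn:binnormalcurved} established in the appendix: for a fixed value, $W_t(q,\delta)\le\sqrt{V(q)/t}+\tilde{O}(\sqrt{V(q)\log(1/\delta)/t})$, where $V(q)=\tfrac1t+(q-\tfrac12)/\log(q/(1-q))$ — the $\tfrac1t$ arising from the mixing/regularization in \cref{eqn:binmart} and the second summand being the sharp sub-Gaussian variance proxy of a centered Bernoulli$(q)$, which controls the cumulant sums $\sum_{s\le t}\log h(\lambda,\theta_s)$ via concavity. By Step 2 the two empirical values are $\tilde{O}(\xi_t^{-1}2^{-d}+t^{-1/2})$-close to $q_t$, and $V(\cdot)$ varies slowly on that scale, so $V(\hat{q}_t(\rho_d)),V(\hat{q}_t(\sigma_d))\le V_t(1+o(1))$ with $V_t$ as in the statement, the correction being absorbed into $\tilde{O}$.

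\textbf{Step 4 (choice of level) and the main obstacle.} Take $d_t$ to be the least integer with $2\xi_t^{-1}2^{-d_t}\le\sqrt{V_t/t}$: then the discretization term contributes at most the stated leading term $\sqrt{V_t/t}$, while $d_t=O(\log_2(\xi_t^{-1}\sqrt{t/V_t}))$, hence $\log(1/\delta_{d_t})=2d_t\log 2+\log(1/\alpha)=2\log\xi_t^{-1}+\log\alpha^{-1}+O(\log t)$ (using $V_t\ge 1/t$); substituting into Steps 2--3 makes the two width terms sum to $\tilde{O}(\sqrt{(V_t/t)\log(\xi_t^{-2}\alpha^{-1}t^{3/2})})$, where the precise exponent $\tfrac32$ on $t$ versus the cruder value a direct count yields changes only a constant inside the square root, absorbed by $\tilde{O}$. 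This is \cref{eqn:mainthm}, and it holds for all $v$ simultaneously since $\mathcal G$ does not depend on $v$. The genuine difficulty is Step 3: one needs a fixed-value confidence-sequence width sharp enough to recover the Bernoulli variance proxy in $V_t$ rather than a worst-case $\tfrac14$, and one must transfer that proxy from the empirical CDF at the grid points back to $q_t$ without degrading the leading constant — most delicate when $q_t$ approaches $0$ or $1$ and the proxy degenerates, which is precisely the role of the $\tfrac1t$ regularizer in $V_t$. The level-balancing of Step 4 is then routine.
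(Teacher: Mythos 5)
Your proposal is correct in its essentials and follows the same strategy as the paper's proof in Appendix~C: discretize at dyadic resolution, invoke the fixed-value curved boundary at the grid points, bound the discretization error by $\xi_t^{-1}2^{-d}$ via \cref{def:smooth}, and then pick the level $d$ so that $\xi_t^{-1}2^{-d}\asymp\sqrt{V_t/t}$ (equivalently $\sqrt{\psi_t}<\xi_t\eta^d\le\eta\sqrt{\psi_t}$ with $\psi_t=t/V_t$ in the paper's notation), which makes $\log(1/\delta_d)=O(\log(\xi_t^{-2}\alpha^{-1}t))$. Your Step~1 reduction (early termination never beats the minimum over levels because once $\hat q_t(\rho_d)$ stops changing the widths only grow with decreasing $\delta_d$) is the same domination argument the paper invokes informally, and your Step~4 level choice is the same balancing.

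Two places where your account departs from the paper's and is slightly less clean. First, in Step~3 you write the fixed-value width as a function of the \emph{empirical} statistic $\hat q_t(\rho_d)$ and then transfer back to $q_t$ by arguing $V(\cdot)$ varies slowly; the paper instead writes the curved boundary \eqref{eqn:binnormalcurved} directly in terms of the population variance proxy $tK(q_t)$, so no empirical-to-population transfer of the variance proxy is needed (only the harmless observation that $\AvCDF_t(\rho_d)$ is $\xi_t^{-1}2^{-d}$-close to $q_t$). Your transfer step is the one genuinely delicate point you rightly flag, since $K$ is not Lipschitz near $0$ and $1$; the paper's formulation avoids it, so if you want an airtight Step~3 you should adopt the paper's convention. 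Second, the leading $\sqrt{V_t/t}$ term in \cref{eqn:mainthm} is purely the discretization contribution; the fixed-value width from \eqref{eqn:binnormalcurved} is entirely of the form $\sqrt{2(V_t/t)\log(\cdot)}$ with no separate $\sqrt{V_t/t}$ summand, so your Step~3 claim slightly over-counts, and the $\sqrt{tV_t}$ factor inside the log (the iterated-logarithm correction in the mixture boundary) is missing from your stated width bound. Both of these are absorbed by $\tilde O$, as you acknowledge in Step~4, so the final bound is unaffected.
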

\begin{proof} See \cref{app:proofsmoothedregretzeroone}.
\end{proof}
\cref{thm:smoothedregretzeroone} matches our empirical results in two
important aspects: (i) logarithmic dependence upon smoothness (e.g.,,
\cref{fig:uniformfail}); (ii) tighter intervals for more extreme quantiles
(e.g., \cref{fig:betacurves}).  Note the choice $\epsilon(d) = 2^d$
ensures the loop in \cref{alg:ubunionzeroone} terminates after at most
$\log_2(\Delta)$ iterations, where $\Delta$ is the minimum difference
between two distinct realized values.

\subsection{Extensions}
\label{subsec:extensions}

\paragraph{Arbitrary Support} In \cref{app:arbitrarysupport} we
describe a variant of \cref{alg:ubunionzeroone} which uses
a countable dense subset of the entire real line.
It enjoys a similar guarantee to \cref{thm:smoothedregretzeroone},
but with an additional width which is logarithmic in the probe
value $v$: 
\newcommand{\logvwidthbound}{ \tilde{O}\left(\sqrt{\frac{V_t}{t} \log\left(\left(2 + \xi_t |v| t^{-1/2}\right)^2 \xi_t^{-2} \alpha^{-1} t^{3/2}\right)}\right)
 }
$\logvwidthbound$.
Note in this case $\xi_t$ is defined relative to (unnormalized) Lebesque
measure and can therefore exceed 1.

\paragraph{Discrete Jumps} If $\mathbb{P}_t$ is smooth wrt a
reference measure which charges a countably infinite number of known
discrete points, we can explicitly union bound over these additional
points proportional to their density in the reference measure.
In this case we preserve the above value-uniform guarantees.  See \cref{app:discretejumps} for more details.

For distributions which charge unknown discrete points, we note the proof of \cref{thm:smoothedregretzeroone} only exploits
smoothness local to $v$.
Therefore if the set of discrete points is nowhere
dense, we eventually recover the guarantee
of \cref{eqn:mainthm} after a ``burn-in'' time
$t$ which is logarithmic in the minimum
distance from $v$ to a charged discrete point.

\newcommand{\iwbetaparetocurvescaption}{CDF bounds approaching the true counterfactual CDF when sampling i.i.d.\ from a Beta(6,3) with infinite-variance importance weights, using DDRM for the oracle confidence sequence.}
\begin{figure*}[t]
\centering
\begin{minipage}[t]{.49\textwidth}
  \vskip 0pt
  \centering
  \includegraphics[width=.96\linewidth]{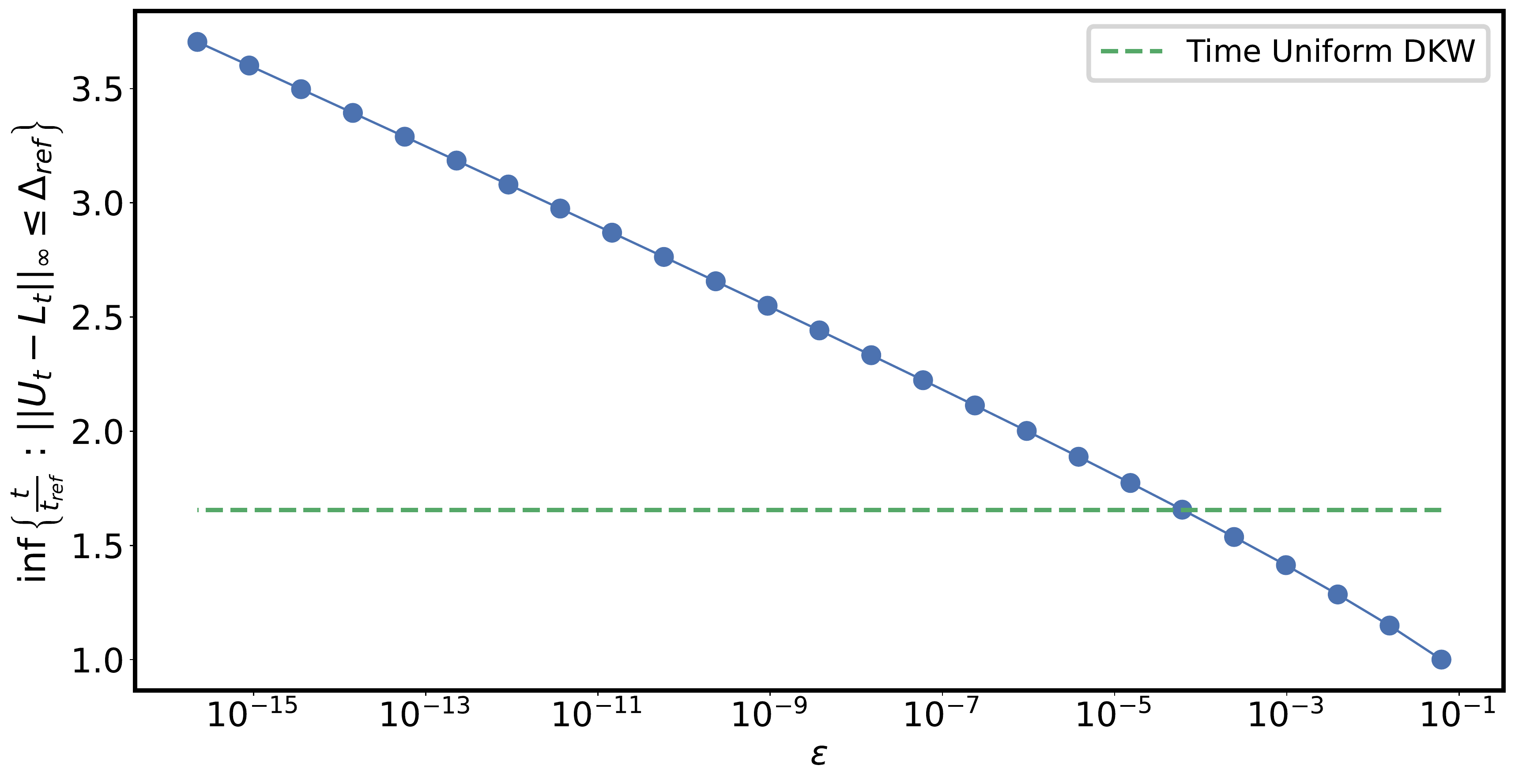}
  \vskip -12pt
  \caption{As smoothness decreases, we require more time to reach the same maximum confidence width. For low smoothness, DKW dominates our method. The logarithmic dependence matches our theory. See \cref{paragraph:fail}.}
  \label{fig:uniformfail}
\end{minipage}
\hfill
\begin{minipage}[t]{.49\textwidth}
  \vskip 1pt
  \centering
  \includegraphics[width=.96\linewidth]{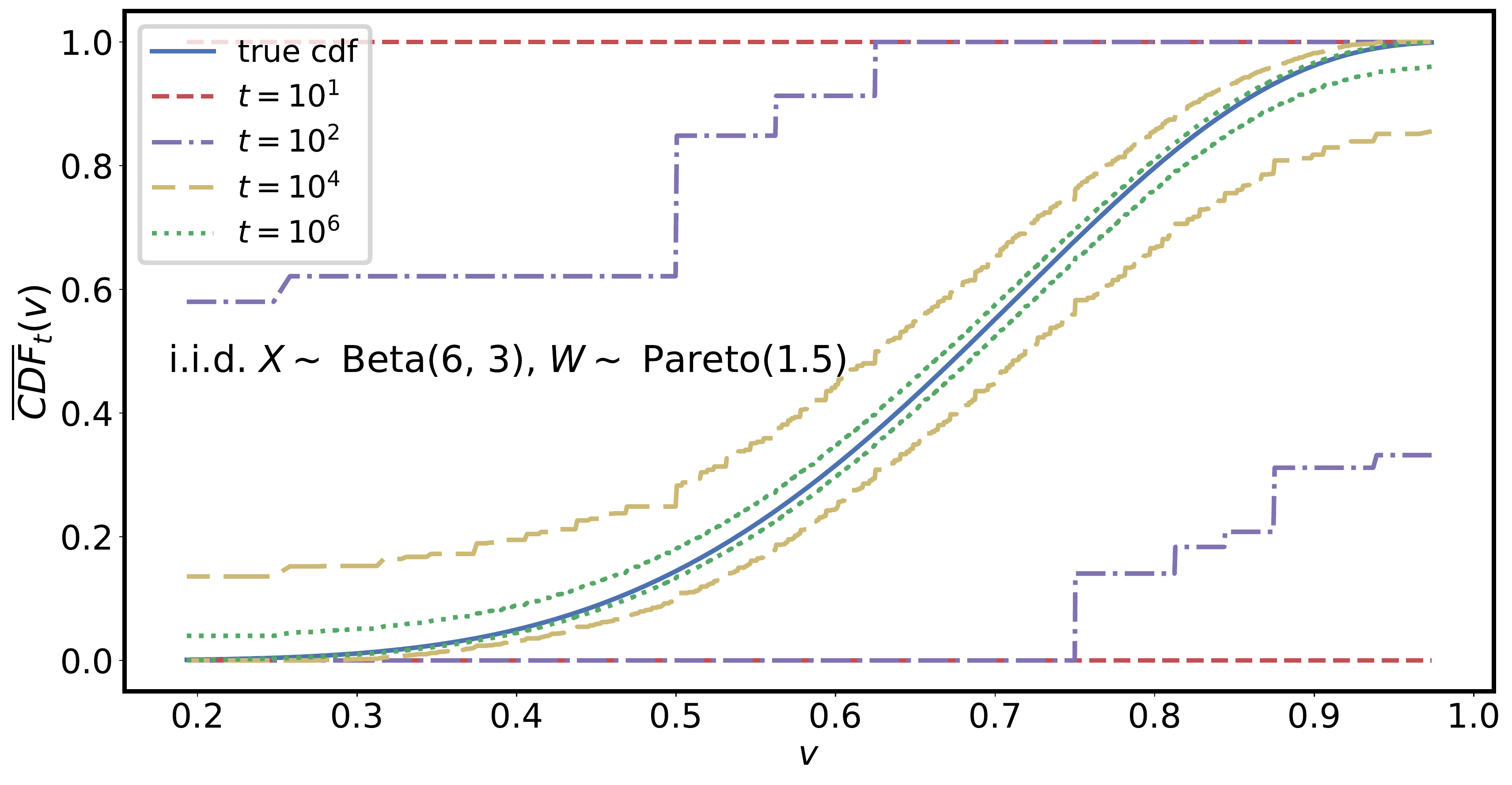}
  \vskip -12pt
  \caption{\iwbetaparetocurvescaption}
  \label{fig:iwexpparetocurves}
\end{minipage}
\vskip -0.2in
\end{figure*}

\subsection{Importance-Weighted Variant}
\label{subsec:importanceweighted}

An important use case is estimating a distribution based upon
observations produced from another distribution with a known shift,
e.g., arising in transfer learning~\cite{pan2010survey} or off-policy
evaluation~\cite{waudby2022anytime}.  In this case the observations are
tuples $(W_t, X_t)$, where the importance weight $W_t$ is a Radon-Nikodym
derivative, implying
$\forall t: \mathbb{E}_t\left[W_t\right] = 1$ and a.s. $W_t \geq 0$;
and the goal is to estimate
$\AvCDF_t(v) = t^{-1} \sum_{s \leq t} \mathbb{E}_{s-1}\left[ W_s 1_{X_s \leq v} \right]$.
The basic approach in \cref{alg:ubunionzeroone} and
\cref{alg:lbunionzeroone} is still applicable in this setting,
but different $\lboracle_t$ and $\uboracle_t$ are required.
In \cref{app:importanceweighted} we present details on two possible
choices for $\lboracle_t$ and $\uboracle_t$: the first is based upon
the empirical Bernstein construction of \citet{howard2021time}, and the
second based upon the DDRM construction of \citet{mineiro2022lower}.
Both constructions leverage the $L^*$ Adagrad bound of
\citet{orabona2019modern} to enable lazy evaluation.  The empirical
Bernstein version is amenable to analysis and computationally lightweight,
but requires finite importance weight variance to converge (the variance
bound need not be known, as the construction adapts to the unknown
variance). The DDRM version requires more computation but produces
tighter intervals.  See \cref{subsec:iidiwexperiments} for a comparison.

Inspired by the empirical Bernstein variant, the following analog of
\cref{thm:smoothedregretzeroone} holds.  Note $\mathbb{P}_t$
is the target (importance-weighted) distribution, not the observation
(non-importance-weighted) distribution.

\begin{restatable}{theorem}{thmIwSmooth}
\label{thm:iwsmoothedregretzeroone}
Let $U_t(v)$ and $L_t(v)$ be the upper and lower bounds returned by
\cref{alg:ubunionzeroone} and \cref{alg:lbunionzeroone} respectively
with $\epsilon(d) = 2^d$ and the confidence sequences $\lboracle_t$
and $\uboracle_t$ of \cref{eqn:iwcurvedboundary}.  If $\forall t:
\mathbb{P}_t$ is $\xi_t$-smooth wrt the uniform distribution on the unit
interval then
\begin{equation}
\begin{split}
&\forall t, \forall v: U_t(v) - L_t(v) \leq \\
&\qquad B_t + \sqrt{\frac{(\tau + V_t)/t}{t}} \\
&\qquad + \tilde{O}\left(\sqrt{\frac{(\tau + V_t)/t}{t} \log\left(\xi_t^{-2} \alpha^{-1}\right)}\right) \\
&\qquad + \tilde{O}(t^{-1} \log\left(\xi_t^{-2} \alpha^{-1}\right)),
\end{split}
\label{eqn:iwmainthm}
\end{equation}
where $q_t \doteq \AvCDF_t(v)$,
$K(q_t) \doteq \nicefrac{(q_t - 1/2)}{\log\left(\nicefrac{q_t}{1-q_t}\right)}$;
$V_t = O\left(K(q_t) \sum_{s \leq t} W_s^2\right)$,
$B_t \doteq t^{-1} \sum_{s \leq t} (W_s - 1)$,
and $\tilde{O}()$ elides polylog
$V_t$ factors.
\end{restatable}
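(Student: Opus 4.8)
The strategy is to reuse the proof of \cref{thm:smoothedregretzeroone} essentially unchanged, swapping the unit-interval confidence sequences of \cref{eqn:binnormalcurved} for the importance-weighted empirical-Bernstein sequences of \cref{eqn:iwcurvedboundary}, and then tracking the two genuinely new ingredients: the weight-imbalance term $B_t=t^{-1}\sum_{s\le t}(W_s-1)$ and the weighted variance proxy $\sum_{s\le t}W_s^2$. We set up exactly as in the unweighted case. Fix $t$ and $v$; the bound is trivial unless $v\in[0,1]$ (for $v>1$ the algorithm returns $1=\AvCDF_t(v)$ and $v<0$ is symmetric), so take $v\in[0,1]$. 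With $\epsilon(d)=2^d$ the grid point $\rho^{+}_d$ queried at depth $d$ by \cref{alg:ubunionzeroone} satisfies $0\le\rho^{+}_d-v\le 2^{-d}$, the point $\rho^{-}_d$ of \cref{alg:lbunionzeroone} satisfies $0\le v-\rho^{-}_d\le 2^{-d}$, and the two are at most one cell apart, $\rho^{+}_d-\rho^{-}_d\le 2^{-d}$. Since \cref{alg:ubunionzeroone,alg:lbunionzeroone} return a minimum, respectively a maximum, over all depths up to their (a.s.\ finite, by \cref{thm:coverage}) termination, we get $U_t(v)\le\uboracle_t(\rho^{+}_d;\delta_d,\suffstat_t)$ and $L_t(v)\ge\lboracle_t(\rho^{-}_d;\delta_d,\suffstat_t)$ for every depth $d$ below termination, with $\delta_d=\alpha/4^d$; if the loop halts before a target depth $d^{\star}$ chosen later, the relevant radius is only smaller and the empirical mass in the skipped cells vanishes, so it suffices to argue at $d^{\star}$.

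On the event $\mathcal E$ that all oracle confidence sequences cover simultaneously, which has probability at least $1-2\alpha$ by the union-bound argument of \cref{thm:coverage}, monotonicity of $\AvCDF_t$ and \cref{def:smooth} give $\AvCDF_t(\rho^{+}_d)-\AvCDF_t(\rho^{-}_d)=\mathbb P_t\!\left((\rho^{-}_d,\rho^{+}_d]\right)\le\xi_t^{-1}2^{-d}$, hence
\[
U_t(v)-L_t(v)\ \le\ \left[\uboracle_t(\rho^{+}_d;\delta_d,\suffstat_t)-\AvCDF_t(\rho^{+}_d)\right]+\left[\AvCDF_t(\rho^{-}_d)-\lboracle_t(\rho^{-}_d;\delta_d,\suffstat_t)\right]+\xi_t^{-1}2^{-d}.
\]
Each bracket is at most the corresponding one-sided radius of the IW confidence sequence. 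From \cref{eqn:iwcurvedboundary}, which builds on the empirical-Bernstein construction of \citet{howard2021time} with the $L^{*}$ Adagrad tuning of \citet{orabona2019modern}, this radius at value $\rho$ and level $\delta$ has the shape $B_t+\sqrt{(\tau+\hat V_t(\rho))/t^2}+\tilde O\!\left(\sqrt{\hat V_t(\rho)\log(1/\delta)}/t\right)+\tilde O\!\left(\log(1/\delta)/t\right)$, where $\tau$ is the numerical regularization constant of the construction and $\hat V_t(\rho)\asymp\sum_{s\le t}W_s^2\,g(\hat q_t(\rho))$ is the realized variance proxy built from the centered-Bernoulli curvature $g$ underlying \cref{eqn:binmart}; the additive $B_t$ enters because the weighted empirical CDF limits to $1+B_t$ rather than $1$, so the band no longer pinches exactly at the endpoints.

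It remains to replace $\hat q_t(\rho^{\pm}_d)$ by $q_t\doteq\AvCDF_t(v)$ inside the variance proxy and to choose the depth. On $\mathcal E$, $\hat q_t(\rho^{\pm}_d)$ differs from $q_t$ by at most a confidence radius plus the local smoothness bias $\xi_t^{-1}2^{-d}$; feeding this back through the locally Lipschitz, self-bounding curvature $g$ and using $g(q_t)\asymp K(q_t)$ gives, after the standard empirical-Bernstein fixed-point step that resolves the appearance of the variance inside its own radius, $\hat V_t(\rho^{\pm}_d)=O(\tau)+O\!\left(K(q_t)\sum_{s\le t}W_s^2\right)=O(\tau+V_t)$. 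The only remaining $d$-dependence is through $\log(1/\delta_d)=2d\log 2+\log\alpha^{-1}$ and the geometric bias $\xi_t^{-1}2^{-d}$; choosing $d^{\star}$ logarithmic in $t$ and $\xi_t^{-1}$ drives the bias below the statistical terms, so that $\log(1/\delta_{d^{\star}})$ contributes the explicit factor $\log(\xi_t^{-2}\alpha^{-1})$ together with a $\log$-of-$\mathrm{poly}(t)$ term absorbed into the stated $\tilde O$ (which elides polylog-$V_t$ factors, $V_t$ being of order $\sum_{s\le t}W_s^2$). Collecting the $B_t$, the $\delta$-free variance term, the $\log$-dependent variance term, and the $\log$-dependent $1/t$ term, and using $V_t=O(K(q_t)\sum_{s\le t}W_s^2)$, yields exactly \cref{eqn:iwmainthm}.

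I expect the main obstacle to be the variance-substitution step: tightly controlling the weighted empirical variance $\hat V_t(\rho^{\pm}_d)$ in terms of the unobservable $q_t$ couples the coverage event, the local-smoothness bias, the curvature of $g$, and the self-referential structure of the empirical-Bernstein radius — the same delicacy already present in \cref{thm:smoothedregretzeroone}, now compounded by the $\sum_{s\le t}W_s^2$ weighting and by the $B_t$ offset, which has to be carried through rather than cancelled. A secondary nuisance is bookkeeping the boundary cases $q_t\in\{0,1\}$ and $v\notin[0,1]$, where $K$ is read by its continuous extension $K(0)=K(1)=0$ (reflecting the tighter bands at extreme quantiles), and the algorithm's early-termination behavior under $\epsilon(d)=2^d$.
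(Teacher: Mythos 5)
Your proposal takes essentially the same route as the paper: reuse the grid and depth-selection argument from the proof of \cref{thm:smoothedregretzeroone}, swap in the curved boundary of \cref{eqn:iwcurvedboundary} (from \cref{lemma:iwcurved}), and carry the new $B_t$ and $\sum_s W_s^2$ dependence through the same depth choice $\sqrt{\psi_t}<\xi_t\eta^d\le\eta\sqrt{\psi_t}$, with $\log(\eta^{2d}/\alpha)$ producing the $\log(\xi_t^{-2}\alpha^{-1})$ factor. Two points deserve scrutiny. First, you attribute a $B_t$ term to each one-sided oracle radius (``this radius at value $\rho$ and level $\delta$ has the shape $B_t+\cdots$''), but the paper's derivation places $B_t$ on only one side: the lower oracle $\lboracle_t$ is constructed as $1$ minus an upper bound on the complementary statistic $t^{-1}\sum_s W_s(1-1_{X_s\le\rho})$, and since $t^{-1}\sum_s W_s 1_{X_s\le\rho}+t^{-1}\sum_s W_s(1-1_{X_s\le\rho})=1+B_t$ rather than $1$, the realized offset enters exactly once in $U_t-L_t$. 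Your heuristic (the empirical weighted CDF limits to $1+B_t$) is the right underlying algebra, but as phrased it would double-count $B_t$. Second, the ``variance-substitution step'' you flag as the main obstacle is not actually carried out by the paper either: \cref{lemma:iwcurved} is stated directly in terms of the observable variance process $V_t$ at the queried grid point, and the theorem's characterization $V_t=O(K(q_t)\sum_s W_s^2)$ is asserted as a re-expression rather than derived by a careful coupling of $\hat V_t(\rho_d^{\pm})$ to $q_t$. Your proposed fixed-point machinery is extra structure the paper avoids; the closer match to the paper's argument is to leave the bound in terms of the realized $V_t$ and treat the $K(q_t)\sum_s W_s^2$ form as an informal gloss, which also dissolves the obstacle you anticipate.
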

\begin{proof} See \cref{app:proofiwsmoothedregretzeroone}.
\end{proof}
\cref{thm:iwsmoothedregretzeroone} exhibits the following key
properties: (i) logarithmic dependence upon smoothness; (ii)
tighter intervals for extreme quantiles and importance weights
with smaller quadratic variation; (iii) no explicit dependence
upon importance weight range; (iv) asymptotic zero width for
importance weights with sub-linear quadratic variation.

\paragraph{Additional Remarks} First, the importance-weighted average
CDF is a well-defined mathematical quantity, but the interpretation as
a counterfactual distribution of outcomes given different actions in
the controlled experimentation setting involves subtleties: we refer the
interested reader to \citet{waudby2022anytime} for a complete discussion.
Second, the need for nonstationarity techniques for estimating the
importance-weighted CDF is driven by the outcomes $(X_t)$ and not the
importance-weights $(W_t)$.  For example with off-policy contextual
bandits, a changing historical policy does not induce nonstationarity,
but a changing conditional reward distribution does.

\section{Simulations}
\label{sec:simulations}

These simulations explore the empirical
behaviour of \cref{alg:ubunionzeroone} and
\cref{alg:lbunionzeroone} when instantiated
with $\epsilon(d) = 2^d$ and
curved boundary
oracles $\lboracle$ and $\uboracle$. To save
space, precise details on the experiments as
well additional figures are elided to
\cref{app:simulations}.  Reference implementations
which reproduce the figures are available at \megaurl.

\subsection{The i.i.d. setting}

These simulations exhibit our techniques on i.i.d.\ data.  Although
the i.i.d.\ setting does not fully exercise the technique, it is convenient for visualizing
convergence to the unique true CDF.  In this setting
the DKW inequality applies, so to build intuition about our
statistical efficiency, we compare our bounds with a naive time-uniform
version of DKW resulting from a $\left(\nicefrac{6}{\pi^2 t^2}\right)$
union bound over time.

\paragraph{Beta distribution} In this case the data is smooth
wrt the uniform distribution on $[0, 1]$ so we can directly
apply \cref{alg:ubunionzeroone} and \cref{alg:lbunionzeroone}.
\cref{fig:betacurves} shows the bounds converging to the true CDF
as $t$ increases for an i.i.d.  $\text{Beta}(6, 3)$ realization.
\cref{fig:betawidth} compares the bound width to time-uniform DKW at
$t=10000$ for Beta distributions that are increasingly less smooth with
respect to the uniform distribution.  The DKW bound is identical for all,
but our bound width increases as the smoothness decreases.

The additional figures in \cref{app:simulations}
clearly indicate tighter bounds at extreme
quantiles. The analysis of \cref{thm:smoothedregretzeroone} uses the
worst-case variance (at the median) and
does not reflect this.

\newcommand{\lognormalwidthcaption}{Demonstration of the variant described in \cref{subsec:extensions,app:arbitrarysupport} for distributions with arbitrary support, based on i.i.d.\ sampling from a variety of lognormal distributions. Logarithmic range dependence is evident.}

\begin{figure*}[t]
\centering
\begin{minipage}[t]{.49\textwidth}
  \vskip 0pt
  \centering
  \includegraphics[width=.96\linewidth]{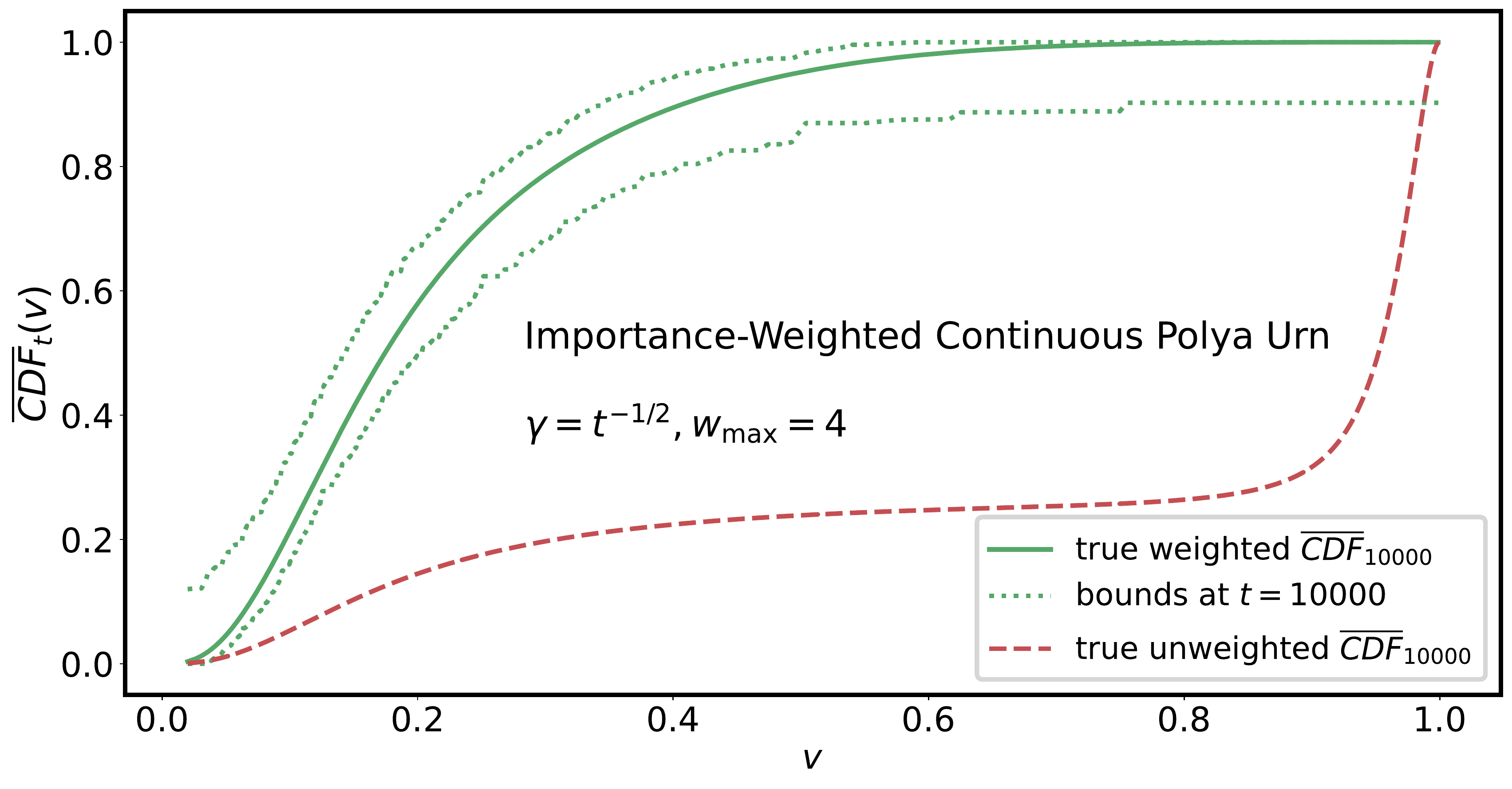}
  \vskip -12pt
  \caption{A nonstationary, importance-weighted simulation in which the factual distribution (red) diverges dramatically from the counterfactual distribution (green). The bound correctly covers the counterfactual CDF.}
  \label{fig:iwcontpolyatwoseeds}
\end{minipage}
\hfill
\begin{minipage}[t]{.49\textwidth}
  \vskip 0pt
  \centering
  \includegraphics[width=.96\linewidth]{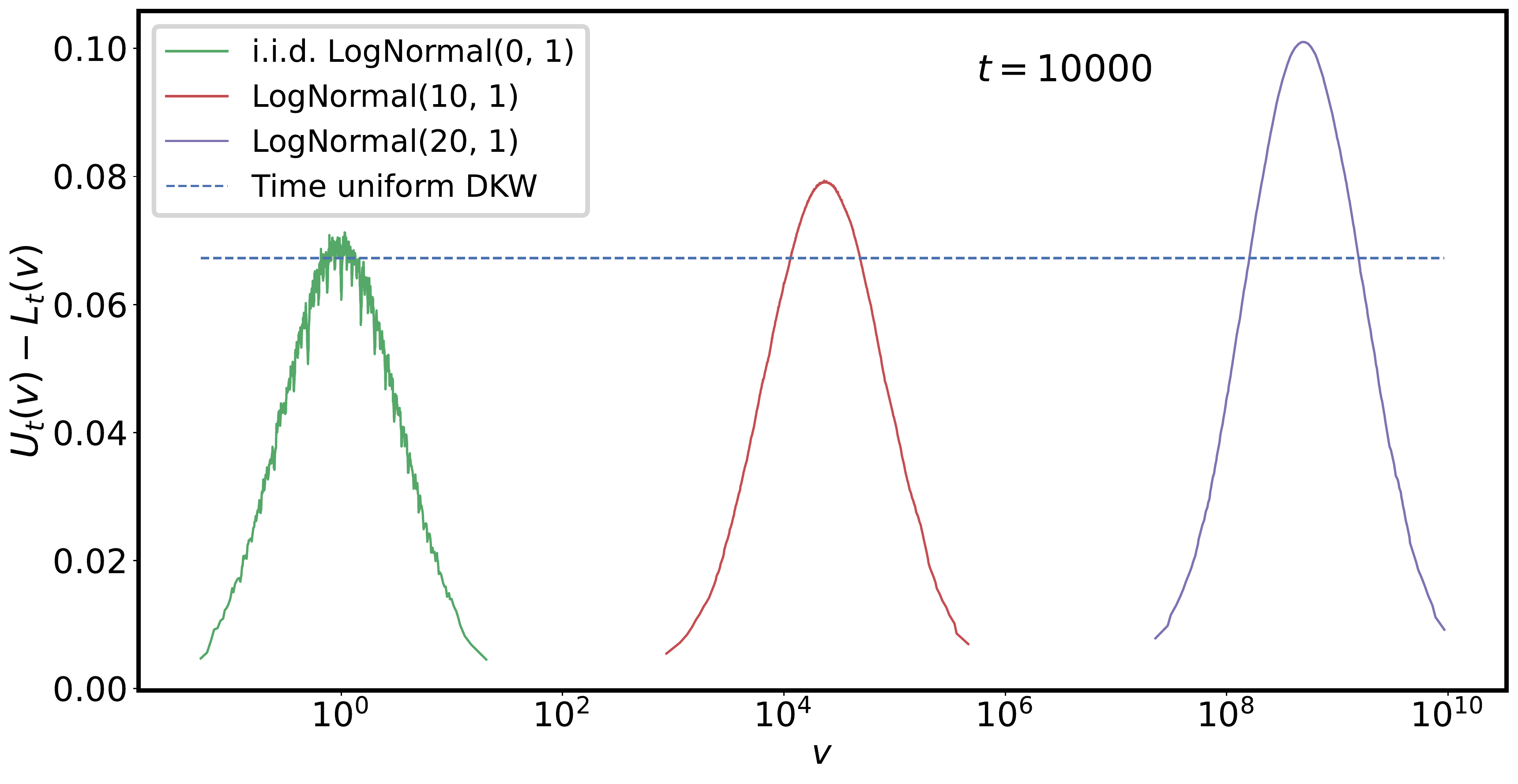}
  \vskip -12pt
  \caption{\lognormalwidthcaption}
  \label{fig:lognormalwidth}
\end{minipage}
\vskip -0.2in
\end{figure*}

\paragraph{Beyond the unit interval} In \cref{fig:lognormalwidth}
(main text) and \cref{app:iidsimulations} we present
further simulations of i.i.d. lognormal and Gaussian random variables,
ranging over $\mathbb{R}^+$ and $\mathbb{R}$ respectively, and using
\cref{alg:ubunionrealline}.  The logarithmic dependence of the
bound width upon the probe value is evident.

\paragraph{An Exhibition of Failure}
\label{paragraph:fail}
\cref{fig:uniformfail}
shows the (empirical) relative convergence
when the data is
simulated i.i.d. uniform over $[0, \epsilon]$ for decreasing $\epsilon$
(hence decreasing smoothness).  The reference
width is the maximum bound width obtained with \cref{alg:ubunionzeroone}
and \cref{alg:lbunionzeroone} at
$t_{\text{ref}} = 10000$ and $\epsilon = 1/16$,
and shown is the multiplicative factor of time required for the maximum bound width
to match the reference width as smoothness varies. The trend is consistent with arbitrarily
poor convergence with arbitrarily small $\epsilon$. Because this is
i.i.d.\ data, DKW applies and a uniform bound (independent of $\epsilon$)
is available.  Thus while our instance-dependent guarantees are valuable
in practice, they can be dominated by stronger guarantees leveraging
additional assumptions. On a positive note, a logarithmic dependence on smoothness is evident
over many orders of magnitude, confirming the analysis of
\cref{thm:smoothedregretzeroone}.

\paragraph{Importance-Weighted}
\label{subsec:iidiwexperiments}
In these simulations, in addition to being i.i.d., $X_t$ and $W_t$
are drawn independently of each other, so the importance weights merely
increase the difficulty of ultimately estimating the same quantity.

In the importance-weighted case, an additional aspect is whether
the importance-weights have finite or infinite variance.
\cref{fig:iwexpbetacurves,fig:iwexpparetocurves}
demonstrate convergence in both conditions when using DDRM for pointwise
bounds.  \cref{fig:iwexpbetacurvesbern,fig:iwexpparetocurvesbern} show the results using empirical
Bernstein pointwise bounds.  In theory, with enough samples and infinite
precision, the infinite variance Pareto simulation would eventually
cause the empirical Bernstein variant to reset to trivial bounds, but
in practice this is not observed.  Instead, DDRM is consistently tighter
but also consistently more expensive to compute, as exemplified in
\cref{tab:compddrmebern}.  Thus either choice is potentially preferable.
\begin{table}[h]
\caption{Comparison of DDRM and Empirical Bernstein on i.i.d. $X_t \sim
\text{Beta}(6, 3)$, for different $W_t$.  Width denotes the maximum
bound width $\sup_v U_t(v) - L_t(v)$.  Time is for computing the bound
at 1000 equally spaced points.}
\label{tab:compddrmebern}
\vskip 0.15in
\begin{center}
\begin{small}
\begin{sc}
\begin{tabular}{lccr}
\toprule
$W_t$ &
What &
Width &
Time (sec) \\
\midrule
\multirow{2}{*}{$\text{Exp}(1)$} & DDRM & 0.09 & 24.8 \\
 & Emp. Bern & 0.10 & 1.0 \\
\multirow{2}{*}{$\text{Pareto}(\nicefrac{3}{2})$} & DDRM & 0.052 & 59.4 \\
 & Emp. Bern & 0.125 & 2.4 \\
\bottomrule
\end{tabular}
\end{sc}
\end{small}
\end{center}
\vskip -0.1in
\end{table}

\subsection{Nonstationary}
\label{exp:nonstationary}

\paragraph{Continuous Polya Urn} In this case
$$
X_t \sim \text{Beta}\left(2 + \gamma_t \sum_{s < t} 1_{X_s > \nicefrac{1}{2}}, 2 + \gamma_t \sum_{s < t} 1_{X_s \leq \nicefrac{1}{2}}\right),$$ i.e., $X_t$ is Beta distributed
with parameters becoming more extreme over time: each realization will
increasingly concentrate either towards $0$ or $1$.  Suppose $\gamma_t
= t^q$.  In the most extreme case that $\left(t = \sum_{s \leq
t} 1_{X_s > \nicefrac{1}{2}}\right)$, the conditional distribution at time $t$
is $\text{Beta}\left(x; 2 + t \gamma_t, 2\right) = O(t^{1+q})$,
hence $\nicefrac{d\mathbb{P}_t}{dU} = O(t^{1 + q})$, which is smooth enough
for our bounds to converge.  \cref{fig:contpolyatwoseeds}
shows the bounds covering the true CDF for two realizations with different
limits.  \cref{fig:contpolyagammasweep} shows (for one realization)
the maximum bound width, scaled by $\sqrt{\nicefrac{t}{\log(t)}}$ to
remove the primary trend, as a function of $t$ for different $\gamma_t$
schedules.

\paragraph{Importance-Weighted Continuous Polya Urn} In this case
$W_t$ is drawn iid either $W_t = 0$ or $W_t = w_{\max}$, such as
might occur during off-policy evaluation with an $\epsilon$-greedy
logging policy. Given $W_t$, the distribution of $X_t$ is given by
$$
\begin{aligned}
X_t | W_t &\sim \text{Beta}\left(2 + \gamma_t \sum_{s < t} 1_{X_s > 1/2} 1_{W_s = W_t}, \right. \\
&\qquad \qquad \left. 2 + \gamma_t \sum_{s < t} 1_{X_s < 1/2} 1_{W_s = W_t}\right),
\end{aligned}
$$
i.e., each importance weight runs an independent Continuous Polya Urn.
Because of this, it is possible for the unweighted CDF to mostly
concentrate at one limit (e.g., 1) but the weighted CDF to concentrate
at another limit (e.g., 0).  \cref{fig:iwcontpolyatwoseeds} exhibits
this phenomenon.

\section{Related Work}
\label{sec:relatedwork}

Constructing nonasymptotic confidence bands for the cumulative distribution function of iid random variables is a classical problem of statistical inference dating back to~\citet{dvoretzky1956asymptotic} and~\citet{massart1990tight}. While these bounds are quantile-uniform, they are ultimately fixed-time bounds (i.e.~not time-uniform). In other words, given a sample of iid random variables $X_1, \dots, X_n \sim F$, these fixed time bounds $[\dot L_n(x), \dot U_n(x)]_{x \in \RR}$ satisfy a guarantee of the form:
\begin{equation}
  \PP(\forall x \in \RR,\ \dot L_n(x) \leq F(x) \leq \dot U_n(x)) \geq 1-\alpha,
\end{equation}
for any desired error level $\alpha \in (0, 1)$. \citet{howard2022sequential} developed confidence bands $[\widebar L_t(x), \widebar U_t(x)]_{x \in \RR, t \in \NN}$ that are both quantile- \emph{and} time-uniform, meaning that they satisfy the stronger guarantee:
\begin{equation}
  \PP(\forall x \in \RR, t \in \NN,\ \widebar L_t(x) \leq F(x) \leq \widebar U_t(x)) \geq 1-\alpha.
\end{equation}

However, the bounds presented in \citet{howard2022sequential} ultimately focused on the classical iid \emph{on-policy} setup, meaning the CDF for which confidence bands are derived is the same CDF as those of the observations~$\infseqt{X_t}$. This is in contrast to off-policy evaluation problems such as in randomized controlled trials, adaptive A/B tests, or contextual bandits, where the goal is to estimate a distribution different from that which was collected (e.g.~collecting data based on a Bernoulli experiment with the goal of estimating the counterfactual distribution under treatment or control). \citet{chandak2021universal} and \citet{huang2021off} both introduced fixed-time (i.e.~non-time-uniform) confidence bands for the off-policy CDF in contextual bandit problems, though their procedures are quite different, rely on different proof techniques, and have different properties from one another. \citet[Section 4]{waudby2022anytime} later developed \emph{time-uniform} confidence bands in the off-policy setting, using a technique akin to \citet[Theorem 5]{howard2022sequential} and has several desirable properties in comparison to \citet{chandak2021universal} and \citet{huang2021off} as outlined in~\citet[Table 2]{waudby2022anytime}.

Nevertheless, regardless of time-uniformity or on/off-policy estimation, all of the aforementioned prior works assume that the distribution to be estimated is \emph{fixed and unchanging over time}. The present paper takes a significant departure from the existing literature by deriving confidence bands that allow the distribution to change over time in a data-dependent manner, all while remaining time-uniform and applicable to off-policy problems in contextual bandits. Moreover, we achieve this by way of a novel stitching technique which is closely related to those of~\citet{howard2022sequential} and~\citet{waudby2022anytime}.

\section{Discussion}
\label{sec:discussion}

This work constructs bounds by tracking specific
values, in contrast with i.i.d.\ techniques which
track specific quantiles.  The value-based approach
is amenable to proving correctness qua
\cref{thm:coverage}, but has the disadvantage
of sensitivity to monotonic transformations.
We speculate it is possible to be covariant
to a fixed (wrt time) but unknown monotonic
transformation without violating known
impossibility results.  A technique with this
property would have increased practical utility.

\section*{Acknowledgements}

The authors thank Ian Waudby-Smith for insightful discussion and review.

\bibliographystyle{plainnat}
\bibliography{icml2023/csnsquantile}

\newpage
\appendix
\onecolumn

\section{Confidence Sequences for Fixed $v$}
\label{app:confseqreview}

Since our algorithm operates via reduction to pointwise confidence sequences,
we provide a brief self-contained review here.  We refer the interested
reader to \citet{howard2021time} for a more thorough treatment.

A confidence sequence for a random process $X_t$ is a time-indexed
collection of confidence sets $\text{CI}_t$ with a time-uniform
coverage property $\mathbb{P}\left(\forall t \in \mathbb{N}: X_t \in
\text{CI}_t\right) \geq 1 - \alpha$.  For real random variables,
the concept of a lower confidence sequence can be defined via
$\mathbb{P}\left(\forall t \in \mathbb{N}: X_t \geq L_t\right) \geq 1 -
\alpha$, and analogously for upper confidence sequences; and a lower
and upper confidence sequence can be combined to form a confidence
sequence $\text{CI}_t \doteq \left\{ x | L_t \leq x \leq U_t \right\}$
with coverage $(1 - 2 \alpha)$ via a union bound.

One method for constructing a lower confidence sequence for a real valued
parameter $z$ is to exhibit a real-valued random process $E_t(z)$ which,
when evaluated at the true value $z^*$ of the parameter of interest,
is a non-negative supermartingale with initial value of 1, in which case
Ville's inequality ensures $\mathbb{P}\left(\forall t \in \mathbb{N}:
E_t(z^*) \leq \alpha^{-1}\right) \geq 1 - \alpha$.  If the process
$E_t(z)$ is monotonically increasing in $z$, then the supremum of the
lower contour set $L_t \doteq \sup_z \left\{ z | E_t(z) \leq \alpha^{-1} \right\}$
is suitable as a lower confidence sequence; an upper confidence sequence
can be analogously defined.

We use the above strategy.  First we lower bound \cref{eqn:binmart},
\begin{equation}
\eqnbinmart
\tag{\ref{eqn:binmart}}
\end{equation}
and eliminate the explicit dependence upon $\theta_s$, by noting
$h(\lambda, \cdot)$ is concave and therefore
\begin{equation}
E_t(\lambda) \geq \exp\left(\lambda t \left(q_t - \hat{q}_t\right) - t\ h\left(\lambda, q_t\right) \right),
\label{eqn:binmartlb}
\end{equation}
because
$\left( t f(q) = \max_{\theta \bigl| 1^\top \theta=t q} \sum_{s \leq t} f(\theta_s) \right)$
for any concave $f$. \cref{eqn:binmartlb} is monotonically increasing
in $q_t$ and therefore defines a lower confidence sequence.  For an
upper confidence sequence we use $q_t = 1 - (1 - q_t)$ and a lower
confidence sequence on $(1 - q_t)$.

Regarding the choice of $\lambda$, in practice many $\lambda$ are
(implicitly) used via stitching (i.e., using different $\lambda$ in
different time epochs and majorizing the resulting bound in closed form)
or mixing (i.e., using a particular fixed mixture of \cref{eqn:binmartlb}
via a discrete sum or continuous integral over $\lambda$); our choices
will depend upon whether we are designing for tight asymptotic rates or
low computational footprint.  We provide specific details associated
with each theorem or experiment.

Note \cref{eqn:binmartlb} is invariant to permutations of $X_{1:t}$
and hence the empirical CDF at time $t$ is a sufficient statistic for
calculating \cref{eqn:binmartlb} at any $v$.

\subsection{Challenge with quantile space}
\label{app:whynotquantilespace}
In this section assume all CDFs are invertible for ease of exposition.

In the i.i.d. setting, \cref{eqn:binmart} can be evaluated at the
(unknown) fixed $v(q)$ which corresponds to quantile $q$.  Without
knowledge of the values, one can assert the existence of such values for a
countably infinite collection of quantiles and a careful union bound of
Ville's inequality on a particular discretization can yield an LIL rate:
this is the approach of \citet{howard2022sequential}.  A key
advantage of this approach is covariance to monotonic transformations.

Beyond the i.i.d. setting, one might hope to analogously evaluate
\cref{eqn:binmart} at an unknown fixed value $v_t(q)$ which for each $t$
corresponds to quantile $q$.  Unfortunately, $v_t(q)$ is not just unknown,
but also unpredictable with respect to the initial filtration,
and the derivation that \cref{eqn:binmart} is a martingale depends
upon $v$ being predictable.  In the case that $X_t$ is independent
but not identically distributed, $v_t(q)$ is initially predictable
and therefore this approach could work, but would only be valid under
this assumption.

The above argument does not completely foreclose the possibility of
a quantile space approach, but merely serves to explain why the
authors pursued a value space approach in this work.  We encourage
the interested reader to innovate.

\section{Unit Interval Bounds}
\label{app:boundzeroone}

\subsection{Lower Bound}
\label{app:lowerboundzeroone}

\begin{algorithm}[tb]
     \caption{Unit Interval Lower Bound.  $\epsilon(d)$ is an increasing function specifying the resolution of discretization at level $d$.  $\lboracle_t\left(\rho; \delta, d, \suffstat_t\right)$ is a lower confidence sequence for fixed value $\rho$ with coverage at least $\left(1 - \delta\right)$.}
     \label{alg:lbunionzeroone}
  \begin{algorithmic}
     \STATE {\bfseries Input:} value $v$; confidence $\alpha$; sufficient statistic $\suffstat_t$. \hfill\algcommentlight{comments below indicate differences from upper bound}
     \STATE \algcommentlight{$\suffstat_t \doteq X_{1:t}$ or $\suffstat_t \doteq (W_{1:t}, X_{1:t})$}
     \STATE {\bfseries Output:} $L_t(v)$ satisfying \cref{eqn:overallguarantee}.
     \IIF{$v < 0$} \textbf{return} 0 \ENDIIF \hfill \algcommentlight{check for underflow of range rather than overflow}
     \STATE $l \leftarrow 0$ \hfill \algcommentlight{initialize with 0 instead of 1}
     \STATE $v \leftarrow \min\left(1, v\right)$ \hfill \algcommentlight{project onto [0, 1] using $\min$ instead of $\max$}
     \FOR{$d=1$ {\bfseries to} $\infty$}
     \STATE $\rho_d \leftarrow \epsilon(d) \lfloor \epsilon(d)^{-1} v \rfloor$ \hfill \algcommentlight{use floor instead of ceiling}
     \STATE $\delta_d \leftarrow \nicefrac{\alpha}{2^d \epsilon(d)}$
     \STATE $l \leftarrow \max\left(l, \lboracle_t\left(\rho_d; \delta, \suffstat_t\right)\right)$ \hfill \algcommentlight{use lower bound instead of upper bound}
     \IF{$0 = \sum_{s \leq t} 1_{X_s \in \left[\rho_d, v\right)}$}
        \RETURN $l$
     \ENDIF
     \ENDFOR
  \end{algorithmic}
  \end{algorithm}

\cref{alg:lbunionzeroone} is extremely similar to
\cref{alg:ubunionzeroone}: the differences are indicated in comments.
Careful inspection reveals the output of \cref{alg:ubunionzeroone},
$U_t(v)$, can be obtained from the output of \cref{alg:lbunionzeroone},
$L_t(v)$, via $U_t(v) = 1 - L_t(1 - v)$; but only if the sufficient
statistics are adjusted such that
$\uboracle_t(\rho_d; \delta, \suffstat_t) = 1 - \lboracle_t(1 - \rho_d; \delta, \suffstat_t')$.
The reference implementation uses this strategy.

\subsection{Proof of \cref{thm:coverage}}
\label{app:proofthmcoverage}

We prove the results for the upper bound \cref{alg:ubunionzeroone}; the argument for the lower bound \cref{alg:lbunionzeroone} is similar.

The algorithm terminates when we find a $d$ such that $0 = \sum_{s \leq t} 1_{X_s \in (v, \rho_d]}$. Since $\epsilon(d) \uparrow \infty$ as $d \uparrow \infty$, we have $\rho_d = \epsilon(d) \lceil \epsilon(d)^{-1} v \rceil \downarrow v$, so that $\sum_{s \leq t} 1_{X_s \in (v, \rho_d]} \downarrow 0$. So the algorithm must terminate.

At level $d$, we have $\epsilon(d)$ confidence sequences. The $i$\textsuperscript{th} confidence sequence at level $d$ satisfies
\begin{align}
P(\exists t: \AvCDF_t(v) > \uboracle_t(i / \epsilon(d); \delta_d, d, \suffstat_t))
&\leq \frac{\alpha}{2^d \epsilon(d)}.
\end{align}
Taking a union bound over all confidence sequences at all levels, we have
\begin{align}
P\left(
    \exists d \in \mathbb{N}, i \in \lbrace 1, \dots, d \rbrace,
        t \in \mathbb{N}:
    \AvCDF_t(i / \epsilon(d))
        > \uboracle_t(i / \epsilon(d); \delta, d, \suffstat_t)
\right) \leq \alpha.
\end{align}

Thus we are assured that, for any $v \in \mathbb{R}$,
\begin{align}
P(\forall t, d: \AvCDF_t(v) \leq \AvCDF_t(\rho_d)
    \leq \uboracle_t(\rho_d; \delta_d, d, \suffstat_t))
\geq 1 - \alpha.
\end{align}
\Cref{alg:ubunionzeroone} will return $\uboracle_t(\rho_d; \delta_d, d, \suffstat_t)$ for some $d$ unless all such values are larger than one, in which case it returns the trivial upper bound of one. This proves the upper-bound half of guarantee \eqref{eqn:overallguarantee}. A similar argument proves the lower-bound half, and union bound over the upper and lower bounds finishes the argument.

\section{Proof of \cref{thm:smoothedregretzeroone}}
\thmSmooth*

\label{app:proofsmoothedregretzeroone}

Note $v$ is fixed for the entire argument below, and $\xi_t$ denotes
the unknown smoothness parameter at time $t$.

We will argue that the upper confidence radius $U_t(v) - t^{-1} \sum_{s
\leq t} 1_{X_s \leq v}$ has the desired rate. An analogous argument
applies to the lower confidence radius $t^{-1} \sum_{s \leq t} 1_{X_s
\leq v} - L_t(v)$, and the confidence width $U_t(v) - L_t(v)$ is the
sum of these two.

For the proof we introduce an integer parameter $\eta \geq 2$ which
controls both the grid spacing ($\epsilon(d) = \eta^d$) and the
allocation of error probabilities to levels ($\delta_d = \alpha /
(\eta^d \epsilon(d))$). In the main paper we set $\eta = 2$.

At level $d$ we construct $\eta^d$ confidence sequences on an
evenly-spaced grid of values $1/\eta^d, 2/\eta^d, \dots, 1$. We divide
total error probability $\alpha / \eta^d$ at level $d$ among these
$\eta^d$ confidence sequences, so that each individual confidence sequence
has error probability $\alpha/\eta^{2d}$.

For a fixed bet $\lambda$ and value $\rho$, $S_t$ defined in
\cref{subsec:unitinterval} is sub-Bernoulli qua \citet[Definition
1]{howard2021time} and therefore sub-Gaussian with variance process
$V_t \doteq t K(q_t)$, where $K(p) \doteq \nicefrac{(2 p - 1)}{2
\log\left(\nicefrac{p}{1-p}\right)}$ is from \citet{kearns1998large};
from \citet[Proposition 5]{howard2021time} it follows that there exists
an explicit mixture distribution over $\lambda$ such that
\begin{align}
M(t; q_t, \tau) &\doteq \sqrt{2 \left(t K(q_t) + \tau\right) \log\left(\frac{\eta^{2d}}{2 \alpha} \sqrt{\frac{t K(q_t) + \tau}{\tau}} + 1\right)}
\label{eqn:binnormalcurved}
\end{align}
is a (curved) uniform crossing boundary, i.e., satisfies
\begin{align*}
\frac{\alpha}{\eta^{2d}} &\geq \mathbb{P}\left(\exists t \geq 1: S_t \geq \frac{M(t; q_t, \tau)}{t} \right),
\end{align*}
where $S_t \doteq \AvCDF_t(\rho) - t^{-1} \sum_{s \leq t} 1_{X_s \leq
\rho}$ is from \cref{eqn:binmart}, and $\tau$ is a hyperparameter
to be determined further below.

Because the values at level $d$ are $1/\eta^d$ apart, the worst-case
discretization error in the estimated average CDF value is
\begin{align*}
\AvCDF_t(\epsilon(d) \lceil \epsilon(d)^{-1} v \rceil) - \AvCDF_t(v)
\leq 1/(\xi_t \eta^d),
\end{align*}
and the total worst-case confidence radius including discretization error is
\begin{align*}
r_d(t) &= \frac{1}{\xi_t \eta^d} + \sqrt{\frac{2 \left(K(q_t) + \tau/t\right)}{t} \log\left(\frac{\eta^{2d}}{2 \alpha} \sqrt{\frac{t K(q_t) + \tau}{\tau}} + 1\right)}.
\end{align*}
Now evaluate at $d$ such that
$\sqrt{\psi_t} < \xi_t \eta^d \leq \eta \sqrt{\psi_t}$ where $
\psi_t \doteq t \left(K(q_t) + \tau/t\right)^{-1}$,
\begin{align*}
r_d(t) &\leq \sqrt{\frac{K(q_t) + \tau/t}{t}} + \sqrt{\frac{2 \left(K(q_t) + \tau/t\right)}{t} \log\left(\frac{\xi_t^{-2} \eta^2}{2 \alpha} \left(\frac{t}{K(q_t) + \tau/t}\right) \sqrt{\frac{t K(q_t) + \tau}{\tau}} + 1\right)}.
\end{align*}
The final result is not very sensitive to the choice of $\tau$, and we
use $\tau = 1$ in practice.

\section{Extensions}

\subsection{Arbitrary Support}
\label{app:arbitrarysupport}

\begin{algorithm}[tb]
   \caption{Entire Real Line Upper Bound.  $\epsilon(d)$ is an increasing function specifying the resolution of discretization at level $d$.  $\uboracle_t\left(\rho; \delta, d, \suffstat_t\right)$ is an upper confidence sequence for fixed value $\rho$ with coverage at least $\left(1 - \delta\right)$.}
   \label{alg:ubunionrealline}
\begin{algorithmic}
   \STATE {\bfseries Input:} value $v$; confidence $\alpha$; sufficient statistic $\suffstat_t$.
   \STATE \algcommentlight{e.g. $\suffstat_t \doteq X_{1:t}$ or $\suffstat_t \doteq (W_{1:t}, X_{1:t})$}
   \STATE {\bfseries Output:} $U_t(v)$ satisfying \cref{eqn:overallguarantee}.
   \STATE $u \leftarrow 1$
   \FOR{$d=1$ {\bfseries to} $\infty$}
   \STATE $k_d \leftarrow \lceil \epsilon(d)^{-1} v \rceil$ \hfill \algcommentlight{Sub-optimal: see text for details}
   \STATE $\rho_d \leftarrow \epsilon(d) k_d$
   \STATE $\delta_d \leftarrow \left(\nicefrac{\alpha}{2^d}\right) \left(\nicefrac{3}{(\pi^2 - 3) (1 + |k_d|)^2}\right)$ \hfill \algcommentlight{Union bound over $d \in \mathbb{N}$ and $k_d \in \mathbb{Z}$}
   \STATE $u \leftarrow \min\left(u, \uboracle_t\left(\rho_d; \delta_d, d, \suffstat_t\right)\right)$
   \IF{$0 = \sum_{s \leq t} 1_{X_s \in \left(v, \rho_d\right]}$}
      \RETURN $u$
   \ENDIF
   \ENDFOR
\end{algorithmic}
\end{algorithm}

\cref{alg:ubunionrealline} is a variation on \cref{alg:ubunionzeroone} which
does not assume a bounded range, and instead uses a countably discrete
dense subset of the entire real line.  Using the same argument
of \cref{thm:smoothedregretzeroone} with the modified probability
from the modified union bound, we have
\begin{align*}
|k_d| - 1 &< \eta^{-d} |v| \leq |k_d|, \\
\xi_t / \sqrt{\psi_t} &> \eta^{-d} \geq \eta^{-1} \xi_t / \sqrt{\psi_t} \\
\implies 1 + |k_d| &< 2 + \xi_t |v| / \sqrt{\psi_t} \\
\implies r_d(t) &\leq 
 \logvwidthbound,
\end{align*}
demonstrating a logarithmic penalty in the probe value $v$ (e.g., \cref{fig:lognormalwidth}).

\paragraph{Sub-optimality of $k_d$} The choice of $k_d$
in \cref{alg:ubunionrealline} is amenable to analysis,
but unlike in \cref{alg:ubunionzeroone}, it is not optimal.
In \cref{alg:ubunionzeroone} the probability is allocated uniformly at
each depth, and therefore the closest grid point provides the tightest
estimate.  However in \cref{alg:ubunionrealline}, the probability
budget decreases with $|k_d|$ and because $k_d$ can be negative, it
is possible that a different $k_d$ can produce a tighter upper bound.
Since every $k_d$ is covered by the union bound, in principle we could
optimize over all $k_d$ but it is unclear how to do this efficiently.
In our implementation we do not search over all $k_d$, but we do adjust
$k_d$ to be closest to the origin with the same empirical counts.

\subsection{Discrete Jumps}
\label{app:discretejumps}

\paragraph{Known Countably Infinite}
Suppose $D$ is smooth wrt a reference measure $M$, where $M$ is of
the form $$M = \breve{M} + \sum_{i \in I} \zeta_i 1_{v_i},$$ with $I$
a countable index set, $1 \geq \sum_{i \in I} \zeta_i$ and $\breve{M}$
a sub-probability measure normalizing to $(1 - \sum_{i \in I} \zeta_i)$.
Then we can allocate $(1 - \sum_{i \in I} \zeta_i)$ of our overall
coverage probability to bounding $\breve{M}$ using \cref{alg:ubunionzeroone}
and \cref{alg:lbunionzeroone}.  For the remaining $\{ v_i \}_{i \in I}$
we can run explicit pointwise bounds each with coverage probability
fraction $\zeta_i$.

Computationally, early termination of the infinite search over the
discrete bounds is possible.  Suppose (wlog) $I$ indexes $\zeta$ in
non-increasing order, i.e., $i \leq j \implies \zeta_i \leq \zeta_j$: then as soon as there are no remaining empirical
counts between the desired value $v$ and the most recent discrete value
$v_i$, the search over discrete bounds can terminate.

\section{Importance-Weighted Variant}
\label{app:importanceweighted}

\subsection{Modified Bounds}

\cref{alg:ubunionzeroone} and \cref{alg:lbunionzeroone} are unmodified,
with the caveat that the oracles $\lboracle_t$ and $\uboracle_t$ must
now operate on an importance-weighted realization $(W_{1:t}, X_{1:t})$,
rather then directly on the realization $X_{1:t}$.

\subsubsection{DDRM Variant}

For simplicity we describe the lower bound $\lboracle_t$ only.  The
upper bound is derived analogously via the equality
$Y_s = W_s - (W_s - Y_s)$ and a lower bound on $(W_s - Y_s)$: see
\citet[Remark 3]{waudby2022anytime} for more details.

This is the Heavy NSM from \citet{mineiro2022lower} combined with the
$L^*$ bound of \citet[\S4.2.3]{orabona2019modern}.  The Heavy NSM allow us
to handle importance weights with unbounded variance, while the Adagrad
$L^*$ bound facilitates lazy evaluation.

For fixed $v$, let $Y_t = W_t 1_{X_t \geq v}$ be a non-negative
real-valued discrete-time random process, let $\hat{Y}_t \in [0, 1]$ be a
predictable sequence, and let $\lambda \in [0, 1)$ be a fixed scalar bet.
Then $$
\begin{aligned}
E_t(\lambda) &\doteq \exp\left( \lambda \left(\sum_{s \leq t} \hat{Y}_s - \mathbb{E}_{s-1}\left[Y_s\right]\right) + \sum_{s \leq t} \log\left(1 + \lambda \left(Y_s - \hat{Y}_s\right) \right) \right) \\
\end{aligned}
$$ is a test supermartingale~\citep[\S3]{mineiro2022lower}.  Manipulating, $$
\begin{aligned}
E_t(\lambda) &= \exp\left( \lambda \left(\sum_{s \leq t} Y_s - \mathbb{E}_{s-1}\left[Y_s\right]\right) - \sum_{s \leq t} \underbrace{\left( \lambda \left(Y_s - \hat{Y}_s\right) - \log\left(1 + \lambda \left(Y_s - \hat{Y}_s\right) \right) \right)}_{\doteq h\left(\lambda \left(Y_s - \hat{Y}_s\right)\right)} \right) \\
&= \exp\left( \lambda \left(\sum_{s \leq t} Y_s - \mathbb{E}_{s-1}\left[Y_s\right]\right) - \sum_{s \leq t} h\left(\lambda \left(Y_s - \hat{Y}_s\right)\right) \right) \\
&\geq \exp\left( \lambda \left(\sum_{s \leq t} Y_s - \mathbb{E}_{s-1}\left[Y_s\right]\right) - \left( \sum_{s \leq t} h\left(\lambda \left(Y_s - \hat{Y}_t^*\right)\right) \right) - \text{Reg}(t) \right) & \left(\dagger\right) \\
&= \exp\left( \lambda \left(t \hat{Y}_t^* - \sum_{s \leq t} \mathbb{E}_{s-1}\left[Y_s\right]\right) + \sum_{s \leq t} \log\left(1 + \lambda \left(Y_s - \hat{Y}_t^*\right) \right) - \text{Reg}(t) \right),
\end{aligned}
$$ where for $(\dagger)$ we use a no-regret learner
on $h()$ with regret $\text{Reg}(t)$ to any constant
prediction $\hat{Y}_t^* \in [0, 1]$.  The function $h()$ is
$M$-smooth with $M = \frac{\lambda^2}{(1 - \lambda)^2}$
so we can get an $L^*$ bound~\citep[\S4.2.3]{orabona2019modern} of
$$
\begin{aligned}
\text{Reg}(t) &= 4 \frac{\lambda^2}{(1 - \lambda)^2} + 4 \frac{\lambda}{1 - \lambda} \sqrt{\sum_{s \leq t} h\left(\lambda \left(Y_s - \hat{Y}_t^*\right)\right)} \\
&= 4 \frac{\lambda^2}{(1 - \lambda)^2} + 4 \frac{\lambda}{1 - \lambda} \sqrt{\left(-t \hat{Y}_t^* + \sum_{s \leq t} Y_s\right) - \sum_{s \leq t} \log\left(1 + \lambda \left(Y_s - \hat{Y}_t^*\right)\right)},
\end{aligned}
$$ thus essentially our variance process is inflated by a square-root.  In
exchange we do not have to actually run the no-regret algorithm, which
eases the computational burden.  We can compete with any in-hindsight
prediction: if we choose to compete with the clipped running mean
$\overline{Y_t}$ then we end up with
\begin{equation}
E_t(\lambda) \geq \exp\left( \lambda \left(\min\left(t, \sum_{s \leq t} Y_s\right) - \mathbb{E}_{s-1}\left[Y_s\right]\right) + \sum_{s \leq t} \log\left(1 + \lambda \left(Y_s - \overline{Y_t}\right) \right)  - \text{Reg}(t) \right),
\label{eqn:heavynsmplusadagrad}
\end{equation}
which is implemented in the reference implementation as \pythonmethod{LogApprox:getLowerBoundWithRegret(lam)}.  The $\lambda$-s are mixed
using DDRM from \citet[Thm. 4]{mineiro2022lower},
implemented via the \pythonmethod{DDRM} class and the
\pythonmethod{getDDRMCSLowerBound} method in the reference implementation.
\pythonmethod{getDDRMCSLowerBound} provably correctly early terminates
the infinite sum by leveraging
$$
\begin{aligned}
\sum_{s \leq t} \log\left(1 + \lambda \left(Y_s - \overline{Y_t}\right) \right) &\leq \lambda \left(\sum_{s \leq t} Y_s - t \overline{Y_t}\right)
\end{aligned}
$$ as seen in the termination criterion of the inner method \pythonmethod{logwealth(mu)}.

To minimize computational overhead, we can lower bound
$\log(a + b)$ for $b \geq 0$
using strong concavity qua \citet[Thm. 3]{mineiro2022lower},
resulting in the following geometrically spaced collection of sufficient
statistics:
$$
\begin{aligned}
(1 + k)^{n_l} = z_l &\leq z < z_u = (1 + k) z_l = (1 + k)^{n_l+1}, \\
\end{aligned}
$$ along with distinct statistics for $z = 0$.  $k$ is a hyperparameter
controlling the granularity of the discretization (tighter lower bound vs.
more space overhead): we use $k = 1/4$ exclusively in our experiments.
Note the coverage guarantee is preserved for any choice of $k$ since we
are lower bounding the wealth.

Given these statistics, the wealth can be lower bounded given any
bet $\lambda$ and any in-hindsight prediction $\hat{Y}_t^*$ via $$
\begin{aligned}
f(z) &\doteq \log\left(1 + \lambda \left(z - \hat{Y}_t^*\right) \right), \\
f(z) &\geq \alpha f(z_l) + (1 - \alpha) f(z_u) + \frac{1}{2} \alpha (1 - \alpha) m(z_l), \\
\alpha &\doteq \frac{z_u - z}{z_u - z_l}, \\
m(z_l) &\doteq \left( \frac{k z_l \lambda}{k z_l \lambda + 1 - \lambda \hat{Y}_t^*} \right)^2.
\end{aligned}
$$ Thus when accumulating the statistics, for each $Y_s = W_s 1_{X_s
\geq v}$, a value of $\alpha$ must be accumulated at key $f(z_l)$,
a value of $(1 - \alpha)$ accumulated at key $f(z_u)$, and a value of
$\alpha (1 - \alpha)$ accumulated at key $m(z_l)$.
The \pythonmethod{LogApprox::update} method from the reference implementation
implements this.

Because these sufficient statistics are data linear, a further
computational trick is to accumulate the sufficient statistics with
equality only, i.e., for $Y_s = W_s 1_{X_s = v}$; and when
the CDF curve is desired, combine these point statistics into
cumulative statistics.  In this manner only $O(1)$ incremental
work is done per datapoint; while an additional $O(t \log(t))$ work is done
to accumulate all the sufficient statistics only when the bounds
need be computed.  The method
\pythonmethod{StreamingDDRMECDF::Frozen::\_\_init\_\_} from the reference
implementation contains this logic.

\subsubsection{Empirical Bernstein Variant}
\label{subsec:empirical_bernstein_variant}

For simplicity we describe the lower bound $\lboracle_t$ only.  The
upper bound is derived analogously via the equality
$Y_s = W_s - (W_s - Y_s)$ and a lower bound on $(W_s - Y_s)$: see
\citet[Remark 3]{waudby2022anytime} for more details.

This is the empirical Bernstein NSM from \citet{howard2021time} combined
with the $L^*$ bound of \citet[\S4.2.3]{orabona2019modern}.  Relative to
DDRM it is faster to compute, has a more concise sufficient statistic,
and is easier to analyze; but it is wider empirically, and theoretically
requires finite importance weight variance to converge.

For fixed $v$, let $Y_t = W_t 1_{X_t \geq v}$ be a non-negative
real-valued discrete-time random process, let $\hat{Y}_t \in [0, 1]$ be a
predictable sequence, and let $\lambda \in [0, 1)$ be a fixed scalar bet.
\defcitealias{fan2015exponential}{Fan}
Then $$
\begin{aligned}
E_t(\lambda) &\doteq \exp\left( \lambda \left(\sum_{s \leq t} \hat{Y}_s - \mathbb{E}_{s-1}\left[Y_s\right]\right) + \sum_{s \leq t} \log\left(1 + \lambda \left(Y_s - \hat{Y}_s\right) \right) \right) \\
\end{aligned}
$$ is a test supermartingale~\citep[\S3]{mineiro2022lower}.  Manipulating, $$
\begin{aligned}
E_t(\lambda) &\doteq \exp\left(
\lambda \left(\sum_{s \leq t} Y_s - \mathbb{E}_{s-1}\left[Y_s\right]\right)
- \sum_{s \leq t} \underbrace{\left( \lambda \left(Y_s - \hat{Y}_s\right) - \log\left(1 + \lambda \left(Y_s - \hat{Y}_s\right) \right) \right)}_{\doteq h\left(\lambda \left(Y_s - \hat{Y}_s\right)\right)}
\right) \\
&\geq \exp\left(
\lambda \left(\sum_{s \leq t} Y_s - \mathbb{E}_{s-1}\left[Y_s\right]\right) - h(-\lambda) \sum_{s \leq t} \left(Y_s - \hat{Y}_s\right)^2\right) & \text{\citepalias[Lemma 4.1]{fan2015exponential}} \\
&\geq \exp\left(
\lambda \left(\sum_{s \leq t} Y_s - \mathbb{E}_{s-1}\left[Y_s\right]\right) - h(-\lambda) \left( \text{Reg}(t) + \sum_{s \leq t} \left(Y_s - Y^*_t\right)^2\right) \right) & \left(\dagger\right), \\
&\doteq \exp\left(\lambda S_t - h(-\lambda) V_t\right), \label{eq:empbern_supermg}
\end{aligned}
$$ where $S_t = \sum_{s \leq t} Y_s - \mathbb{E}_{s-1}\left[Y_s\right]$ and for $(\dagger)$ we use a no-regret learner on squared loss on feasible set $[0, 1]$ with regret $\text{Reg}(t)$ to any constant in-hindsight prediction $\hat{Y}_t^* \in [0, 1]$. Since $Y_s$ is unbounded above, the loss is not Lipschitz and we can't get fast rates for squared loss, but we can run Adagrad and get an $L^*$ bound, $$
\begin{aligned}
\text{Reg}(t) &= 2 \sqrt{2} \sqrt{\sum_{s \leq t} g_s^2} \\
&= 4 \sqrt{2} \sqrt{\sum_{s \leq t} (Y_s - \hat{Y}_s)^2} \\
&\leq 4 \sqrt{2} \sqrt{\text{Reg}(t) + \sum_{s \leq t} (Y_s - \hat{Y}_t^*)^2}, \\
\implies \text{Reg}(t) &\leq 16 + 4 \sqrt{2} \sqrt{8 + \sum_{s \leq t} (Y_s - \hat{Y}_t^*)^2}.
\end{aligned}
$$ Thus basically our variance process is inflated by an additive square root.

We will compete with $Y^*_t = \min\left(1, \frac{1}{t} \sum_s Y_s\right)$.

A key advantage of the empirical Bernstein over DDRM is the availability
of both a conjugate (closed-form) mixture over $\lambda$ and a closed-form
majorized stitched boundary.  This yields both computational speedup and
analytical tractability.

For a conjugate mixture, we use the truncated gamma prior from
\citet[Theorem 2]{waudby2022anytime} which yields mixture wealth
\newcommand{\eqniwbernada}{
M_t^{\text{EB}} \doteq \left(\frac{\tau^\tau e^{-\tau}}{\Gamma(\tau) - \Gamma(\tau, \tau)}\right) \left( \frac{1}{\tau + V_t} \right) {}_1F_1\left(1, V_t + \tau + 1, S_t + V_t + \tau\right)
}
\begin{equation}
\eqniwbernada,
\label{eqn:iwempbernada}
\end{equation}
where ${}_1F_1(\ldots)$ is Kummer's confluent hypergeometric function
and $\Gamma(\cdot, \cdot)$ is the upper incomplete gamma function.
For the hyperparameter, we use $\tau = 1$.

\subsection{Proof of \cref{thm:iwsmoothedregretzeroone}}
\label{app:proofiwsmoothedregretzeroone}

\thmIwSmooth*

Note $v$ is fixed for the entire argument below, and $\xi_t$ denotes
the unknown smoothness parameter at time $t$.

We will argue that the upper confidence radius $U_t(v) - t^{-1}
\sum_{s \leq t} W_s 1_{X_s \leq v}$ has the desired rate. An analogous
argument applies to the lower confidence radius.  One difference
from the non-importance-weighted case is that, to be sub-exponential,
the lower bound is constructed from an upper bound on $U'_t(v) = W_s
(1 - 1_{X_s \leq v})$ via $L_t(v) - 1 - U'_t(v)$, which introduces an
additional $B_t = t^{-1} \sum_{s \leq t} (W_s - 1)$ term to the width.
(Note, because $\forall t: \mathbb{E}_t[W_t - 1] = 0$, this term will
concentrate, but we will simply use the realized value here.)

For the proof we introduce an integer parameter $\eta \geq 2$ which
controls both the grid spacing ($\epsilon(d) = \eta^d$) and the
allocation of error probabilities to levels ($\delta_d = \alpha /
(\eta^d \epsilon(d))$). In the main paper we set $\eta = 2$.

At level $d$ we construct $\eta^d$ confidence sequences on an
evenly-spaced grid of values $1/\eta^d, 2/\eta^d, \dots, 1$. We divide
total error probability $\alpha / \eta^d$ at level $d$ among these
$\eta^d$ confidence sequences, so that each individual confidence sequence
has error probability $\alpha/\eta^{2d}$.

For a fixed bet $\lambda$ and value $\rho$, $S_t$ defined in
\cref{subsec:empirical_bernstein_variant} is sub-exponential
qua \citet[Definition 1]{howard2021time} and therefore
from \cref{lemma:iwcurved} there exists an explicit mixture
distribution over $\lambda$ inducing (curved) boundary
\begin{align}
\frac{\alpha}{\eta^{2d}} &\geq \mathbb{P}\left( \exists t \geq 1 : \frac{S_t}{t} \geq \max\left(\frac{C(\tau)}{t}, u\left(V_t; \tau, \frac{\alpha}{\eta^{2d}}\right) \right) \right), \nonumber \\
u\left(V_t; \tau, \frac{\alpha}{\eta^{2d}}\right) &= \sqrt{2 \left(\frac{(\tau + V_t)/t}{t}\right) \log\left( \sqrt{\frac{\tau + V_t}{2 \pi}} e^{-\frac{1}{12 (\tau + V_t) + 1}} \left( \frac{1 + \eta^{2d} \alpha^{-1}}{C(\tau)} \right) \right) } \nonumber \\
&\qquad + \frac{1}{t} \log\left(\sqrt{\frac{\tau + V_t}{2 \pi}} e^{-\frac{1}{12 (\tau + V_t) + 1}} \left( \frac{1 + \eta^{2d} \alpha^{-1}}{C(\tau)} \right)\right), \label{eqn:iwcurvedboundary}
\end{align}
where $S_t \doteq \AvCDF_t(\rho) - t^{-1} \sum_{s \leq t} W_s 1_{X_s \leq
\rho}$, and $\tau$ is a hyperparameter to be determined further below.

Because the values at level $d$ are $1/\eta^d$ apart, the worst-case
discretization error in the estimated average CDF value is
\begin{align*}
\AvCDF_t(\epsilon(d) \lceil \epsilon(d)^{-1} v \rceil) - \AvCDF_t(v)
\leq 1/(\xi_t \eta^d),
\end{align*}
and the total worst-case confidence radius including discretization error is
\begin{align*}
r_d(t) &= \frac{1}{\xi_t \eta^d} + \max\left(\frac{C(\tau)}{t}, u\left(V_t; \tau, \frac{\alpha}{\eta^{2d}}\right)\right).
\end{align*}
Now evaluate at $d$ such that $\sqrt{\psi_t} < \xi_t \eta^d \leq \eta \sqrt{\psi_t}$ where $
\psi_t \doteq t \left((\tau + V_t)/t\right)^{-1}$,
\begin{align*}
r_d(t) &\leq \frac{1}{\sqrt{\psi_t}} + \max\left(\frac{C(\tau)}{t}, u\left(V_t; \tau, \frac{\alpha}{\eta^2 \xi_t^{-2} \psi_t}\right)\right) \\
&= \sqrt{\frac{(\tau + V_t)/t}{t}} + \tilde{O}\left(\sqrt{\frac{(\tau + V_t)/t}{t} \log\left(\xi_t^{-2} \alpha^{-1}\right)}\right) + \tilde{O}(t^{-1} \log\left(\xi_t^{-2} \alpha^{-1}\right)),
\end{align*}
where $\tilde{O}()$ elides polylog $V_t$ factors.  The final result
is not very sensitive to the choice of $\tau$, and we use $\tau = 1$
in practice.

\begin{lemma}
\label{lemma:iwcurved}
Suppose
\begin{align*}
&\exp\left( \lambda S_t - \psi_e(\lambda) V_t \right), \\
\psi_e(\lambda) &\doteq -\lambda - \log(1 - \lambda),
\end{align*}
is sub-$\psi_e$ qua \citet[Definition 1]{howard2021time}; then there
exists an explicit mixture distribution over $\lambda$ with hyperparameter
$\tau > 0$ such that
\begin{align*}
\alpha &\geq \mathbb{P}\left( \exists t \geq 1 : \frac{S_t}{t} \geq \max\left(\frac{C(\tau)}{t}, u\left(V_t; \tau, \alpha\right) \right) \right), \\
u\left(V_t; \tau, \alpha\right) &= \sqrt{2 \left(\frac{(\tau + V_t)/t}{t}\right) \log\left( \sqrt{\frac{\tau + V_t}{2 \pi}} e^{-\frac{1}{12 (\tau + V_t) + 1}} \left( \frac{1 + \alpha^{-1}}{C(\tau)} \right) \right) } \\
&\qquad + \frac{1}{t} \log\left(\sqrt{\frac{\tau + V_t}{2 \pi}} e^{-\frac{1}{12 (\tau + V_t) + 1}} \left( \frac{1 + \alpha^{-1}}{C(\tau)} \right)\right), \\
C(\tau) &\doteq \frac{\tau^\tau e^{-\tau}}{\Gamma(\tau) - \Gamma(\tau, \tau)},
\end{align*}
is a (curved) uniform crossing boundary.
\begin{proof}
We can form the conjugate mixture using a truncated gamma prior from
\citet[Proposition 9]{howard2021time}, in the form from \citet[Theorem
2]{waudby2022anytime}, which is our \cref{eqn:iwempbernada}.
\begin{equation*}
\eqniwbernada,
\end{equation*}
where ${}_1F_1(\ldots)$ is Kummer's confluent hypergeometric function.
Using \citet[identity 13.6.5]{olver2010nist},
\begin{align*}
{}_1F_1(1,a+1,x) &= e^x a x^{-a} \left(\Gamma(a) - \Gamma(a, x)\right)
\end{align*}
where $\Gamma(a, x)$ is the (unregularized) upper incomplete gamma function.
From \citet[Theorem 1.2]{pinelis2020exact} we have
\begin{align*}
\Gamma(a, x) &< \frac{x^a e^{-x}}{x - a} \\
\implies {}_1F_1(1,a+1,x) &\geq e^x a x^{-a} \Gamma(a) - \frac{a}{x - a}.
\end{align*}
Applying this to the mixture yields
\begin{align*}
M_t^{\text{EB}} %
&\geq \frac{C(\tau) e^{\tau + V_t + S_t}}{(\tau + V_t + S_t)^{\tau + V_t}} \Gamma\left(\tau + V_t\right) - \frac{C(\tau)}{S_t} \\
&\geq \frac{C(\tau) e^{\tau + V_t + S_t}}{(\tau + V_t + S_t)^{\tau + V_t}} \Gamma\left(\tau + V_t\right) - 1, & \left(\dagger\right)
\end{align*}
where $\left(\dagger\right)$ follows from the self-imposed
constraint $S_t \geq C(\tau)$.
This yields crossing boundary
\begin{align*}
\alpha^{-1} &= \frac{C(\tau) e^{\tau + V_t + S_t}}{(\tau + V_t + S_t)^{\tau + V_t}} \Gamma\left(\tau + V_t\right) - 1, \\
\frac{e^{\tau + V_t + S_t}}{\left(1 + \frac{S_t}{\tau + V_t}\right)^{\tau + V_t}} &= \left( \frac{\left(\tau + V_t\right)^{\tau + V_t}}{\Gamma\left(\tau + V_t\right)}\right) \left(\frac{1 + \alpha^{-1}}{C(\tau)} \right) \doteq \left( \frac{\left(\tau + V_t\right)^{\tau + V_t}}{\Gamma\left(\tau + V_t\right)}\right) \phi_t(\tau, \alpha), \\
\frac{e^{1 + \frac{S_t}{\tau + V_t}}}{\left(1 + \frac{S_t}{\tau + V_t}\right)} &= \left( \frac{\left(\tau + V_t\right)^{\tau + V_t}}{\Gamma\left(\tau + V_t\right)}\right)^{\frac{1}{\tau + V_t}} \phi_t(\tau, \alpha)^{\frac{1}{\tau + V_t}} \doteq z_t, \\
S_t &= \left(\tau + V_t\right) \left( -1 - W_{-1}\left(-z_t^{-1}\right) \right). %
\end{align*}
\citet[Theorem 1]{chatzigeorgiou2013bounds} states
\begin{align*}
W_{-1}(-e^{-u-1}) &\in -1 - \sqrt{2u} + \left[-u, -\frac{2}{3}u\right] \\
\implies -1 - W_{-1}(-e^{-u-1}) &\in \sqrt{2u} + \left[\frac{2}{3}u, u\right].
\end{align*}
Substituting yields
\begin{align}
\left(\tau + V_t\right) \left( -1 - W_{-1}\left(-z_t^{-1}\right) \right)
&\leq \left(\tau + V_t\right) \left( \sqrt{2 \log\left(\frac{z_t}{e^1}\right)} +  \log\left(\frac{z_t}{e^1}\right) \right). \label{eqn:iwstresone}
\end{align}
From \citet[Equation (9.8)]{fellerintroduction} we have
\begin{align*}
\Gamma(1 + n) &\in \sqrt{2 \pi n} \left(\frac{n}{e^1}\right)^n \left[e^{\frac{1}{12 n + 1}}, e^{\frac{1}{12 n}}\right] \\
\implies \left( \frac{\left(\tau + V_t\right)^{\tau + V_t}}{\Gamma\left(\tau + V_t\right)} \right)^{\frac{1}{\tau + V_t}} &\in \left(\frac{\tau + V_t}{2 \pi}\right)^{\frac{1}{2(\tau + V_t)}} e^1 \left[e^{-\frac{1}{12 (\tau + V_t)^2}}, e^{-\frac{1}{12 (\tau + V_t)^2 + (\tau + V_t)}}\right].
\end{align*}
Therefore
\begin{align}
\left(\tau + V_t\right) \sqrt{2 \log\left(\frac{z_t}{e^1}\right)}
&\leq \left(\tau + V_t\right) \sqrt{2 \log\left(\left(\frac{\tau + V_t}{2 \pi}\right)^{\frac{1}{2(\tau + V_t)}} e^{-\frac{1}{12 (\tau + V_t)^2 + (\tau + V_t)}} \phi_t(\tau, \alpha)^{\frac{1}{\tau + V_t}}\right)} \nonumber \\
&= \sqrt{2 \left(\tau + V_t\right) \log\left( \sqrt{\frac{\tau + V_t}{2 \pi}} e^{-\frac{1}{12 (\tau + V_t) + 1}} \phi_t(\tau, \alpha) \right)}, \label{eqn:iwpartone}
\end{align}
and
\begin{align}
\left(\tau + V_t\right) \log\left(\frac{z_t}{e^1}\right)
&\leq \left(\tau + V_t\right) \log\left(\left(\frac{\tau + V_t}{2 \pi}\right)^{\frac{1}{2(\tau + V_t)}} e^{-\frac{1}{12 (\tau + V_t)^2 + (\tau + V_t)}} \phi_t(\tau, \alpha)^{\frac{1}{\tau + V_t}}\right) \nonumber \\
&= \log\left(\sqrt{\frac{\tau + V_t}{2 \pi}} e^{-\frac{1}{12 (\tau + V_t) + 1}} \phi_t(\tau, \alpha)\right). \label{eqn:iwparttwo}
\end{align}
Combining \cref{eqn:iwstresone,eqn:iwpartone,eqn:iwparttwo} yields the crossing boundary
\begin{align*}
\frac{S_t}{t} &= \sqrt{2 \left(\frac{(\tau + V_t)/t}{t}\right) \log\left( \sqrt{\frac{\tau + V_t}{2 \pi}} e^{-\frac{1}{12 (\tau + V_t) + 1}} \left( \frac{1 + \alpha^{-1}}{C(\tau)} \right) \right) } \\
&\qquad + \frac{1}{t} \log\left(\sqrt{\frac{\tau + V_t}{2 \pi}} e^{-\frac{1}{12 (\tau + V_t) + 1}} \left( \frac{1 + \alpha^{-1}}{C(\tau)} \right)\right).
\end{align*}
\end{proof}
\end{lemma}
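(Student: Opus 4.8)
The plan is to build a method-of-mixtures supermartingale from the one-parameter family $\exp(\lambda S_t - \psi_e(\lambda) V_t)$ and then invert Ville's inequality into the explicit boundary on $S_t/t$. Since $\psi_e(\lambda) = -\lambda - \log(1-\lambda)$ is the gamma cumulant generating function, the natural conjugate prior over the bet $\lambda$ is a truncated gamma with shape parameter $\tau$; this is precisely the mixture of \citet[Proposition 9]{howard2021time} as streamlined in \citet[Theorem 2]{waudby2022anytime}, and it yields the closed-form mixture wealth $M_t^{\mathrm{EB}}$ of \cref{eqn:iwempbernada} in terms of Kummer's confluent hypergeometric function ${}_1F_1$. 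Because $M_t^{\mathrm{EB}}$ is a nonnegative supermartingale started at $1$, Ville's inequality gives $\mathbb{P}(\exists t: M_t^{\mathrm{EB}} \geq \alpha^{-1}) \leq \alpha$, so it suffices to show that whenever $S_t/t$ exceeds the claimed curved boundary one has $M_t^{\mathrm{EB}} \geq \alpha^{-1}$.

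The heart of the argument is to replace the opaque ${}_1F_1$ by an explicitly invertible lower bound. First I would apply the identity ${}_1F_1(1,a+1,x) = e^{x} a x^{-a}(\Gamma(a) - \Gamma(a,x))$ from \citet[13.6.5]{olver2010nist} with $a = \tau + V_t$ and $x = S_t + \tau + V_t$, and upper bound the incomplete-gamma tail using $\Gamma(a,x) < x^{a} e^{-x}/(x-a)$ from \citet[Theorem 1.2]{pinelis2020exact} (valid since $x > a$ once $S_t > 0$). This produces $M_t^{\mathrm{EB}} \geq C(\tau)\big(e^{x} x^{-a}\Gamma(a) - S_t^{-1}\big)$; imposing the side constraint $S_t \geq C(\tau)$ (the other regime is absorbed by a flat branch below) turns the $-S_t^{-1}$ correction into $\geq -1$. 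Equating the resulting lower bound to $\alpha^{-1}$ and writing $x = a(1 + S_t/a)$ reduces matters to $e^{1+y}/(1+y) = z_t$ with $y = S_t/(\tau+V_t)$ and $z_t$ collecting the $\big((\tau+V_t)^{\tau+V_t}/\Gamma(\tau+V_t)\big)^{1/(\tau+V_t)}$ factor and $\phi_t(\tau,\alpha) = (1+\alpha^{-1})/C(\tau)$; its solution is $S_t = (\tau+V_t)\big(-1 - W_{-1}(-z_t^{-1})\big)$ via the lower branch of the Lambert $W$ function.

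The final boundary then follows from two quantitative estimates. First, the Lambert-$W_{-1}$ bound of \citet[Theorem 1]{chatzigeorgiou2013bounds} gives $-1 - W_{-1}(-e^{-u-1}) \leq \sqrt{2u} + u$ with $u = \log(z_t/e)$, which is what produces the characteristic "$\sqrt{\log}$ plus $\log$" shape of $u(V_t;\tau,\alpha)$. Second, Stirling's two-sided bounds for the Gamma function from \citet[(9.8)]{fellerintroduction} convert the awkward $\big((\tau+V_t)^{\tau+V_t}/\Gamma(\tau+V_t)\big)^{1/(\tau+V_t)}$ factor inside $z_t$ into the explicit prefactor $\sqrt{(\tau+V_t)/(2\pi)}\,e^{-1/(12(\tau+V_t)+1)}$. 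Dividing through by $t$ yields $u(V_t;\tau,\alpha)$ exactly as stated. To legitimize the side constraint $S_t \geq C(\tau)$, note that if $S_t < C(\tau)$ then $S_t/t \geq C(\tau)/t$ already fails, so the crossing event is contained in $\{\exists t: S_t/t \geq \max(C(\tau)/t, u(V_t;\tau,\alpha))\} \subseteq \{\exists t: M_t^{\mathrm{EB}} \geq \alpha^{-1}\}$, using that the displayed lower bound on $M_t^{\mathrm{EB}}$ is increasing in $S_t$ so that exceeding the (upper-bounded) threshold $u$ forces $M_t^{\mathrm{EB}} \geq \alpha^{-1}$; Ville then finishes the proof.

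I expect the main obstacle to be obtaining a lower bound on $M_t^{\mathrm{EB}}$ that is simultaneously tight enough to be useful and simple enough to invert in closed form: the ${}_1F_1 \to$ incomplete-gamma $\to$ Lambert-$W$ chain is where essentially all of the slack is introduced, and the self-imposed constraint $S_t \geq C(\tau)$, together with the $\max$ in the boundary absorbing the complementary regime, is the key device that keeps the inversion exact. The subsequent Stirling and Lambert-$W$ estimates are routine special-function bookkeeping.
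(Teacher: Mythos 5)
Your proposal is correct and follows essentially the same route as the paper's proof: the truncated-gamma conjugate mixture yielding the ${}_1F_1$ wealth, the Olver identity plus the Pinelis incomplete-gamma bound, the self-imposed constraint $S_t \geq C(\tau)$ absorbed by the $\max$ branch, the Lambert $W_{-1}$ inversion with the Chatzigeorgiou bound, and the Stirling-type bounds to produce the explicit prefactor. Your handling of the $S_t \geq C(\tau)$ regime and the direction of the inversion (monotone lower bound on the wealth, then Ville) matches the paper's argument, so nothing further is needed.
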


\section{Simulations}
\label{app:simulations}

\subsection{i.i.d. setting}
\label{app:iidsimulations}

For non-importance-weighted simulations, we use
the Beta-Binomial boundary of \citet{howard2021time} for $\lboracle_t$ and
$\uboracle_t$.
The curved boundary is induced by the
test NSM
$$
\begin{aligned}
W_t(b; \hat{q}_t, q_t) &= \frac{\int_{q_t}^1 d\text{Beta}\left(p; b q_t, b (1 - q_t)\right)\ \left(\frac{p}{q_t}\right)^{t \hat{q}_t} \left(\frac{1 - p}{1 - q_t}\right)^{t (1 - \hat{q}_t)}}{\int_{q_t}^1 d\text{Beta}\left(p; b q_t, b (1 - q_t)\right)} \\
&= \frac{1}{(1 - q_t)^{t (1 - \hat{q}_t)} q_t^{t \hat{q}_t}} \left(\frac{\text{Beta}(q_t, 1, b q_t + t \hat{q}_t, b (1 - q_t) + t (1 - \hat{q}_t))}{\text{Beta}(q_t, 1, b q_t, b (1 - q_t))}\right) \\
\end{aligned}
$$ with prior parameter $b=1$.  Further
documentation and details are in the
reference implementation \texttt{csnsquantile.ipynb}.

The importance-weighted simulations use
the constructions from \cref{app:importanceweighted}: the reference
implementation is in \texttt{csnsopquantile.ipynb}
for the DDRM variant and \texttt{csnsopquantile-ebern.ipynb} for the empirical Bernstein variant.

\begin{figure*}[p]
\centering
\begin{minipage}[t]{.49\textwidth}
  \vskip 0pt
  \centering
  \includegraphics[width=.96\linewidth]{betacurves.pdf}
  \vskip -12pt
  \repeatcaption{fig:betacurves}{\betacurvescaption}
\end{minipage}
\hfill
\begin{minipage}[t]{.49\textwidth}
  \vskip 0pt
  \centering
  \includegraphics[width=.96\linewidth]{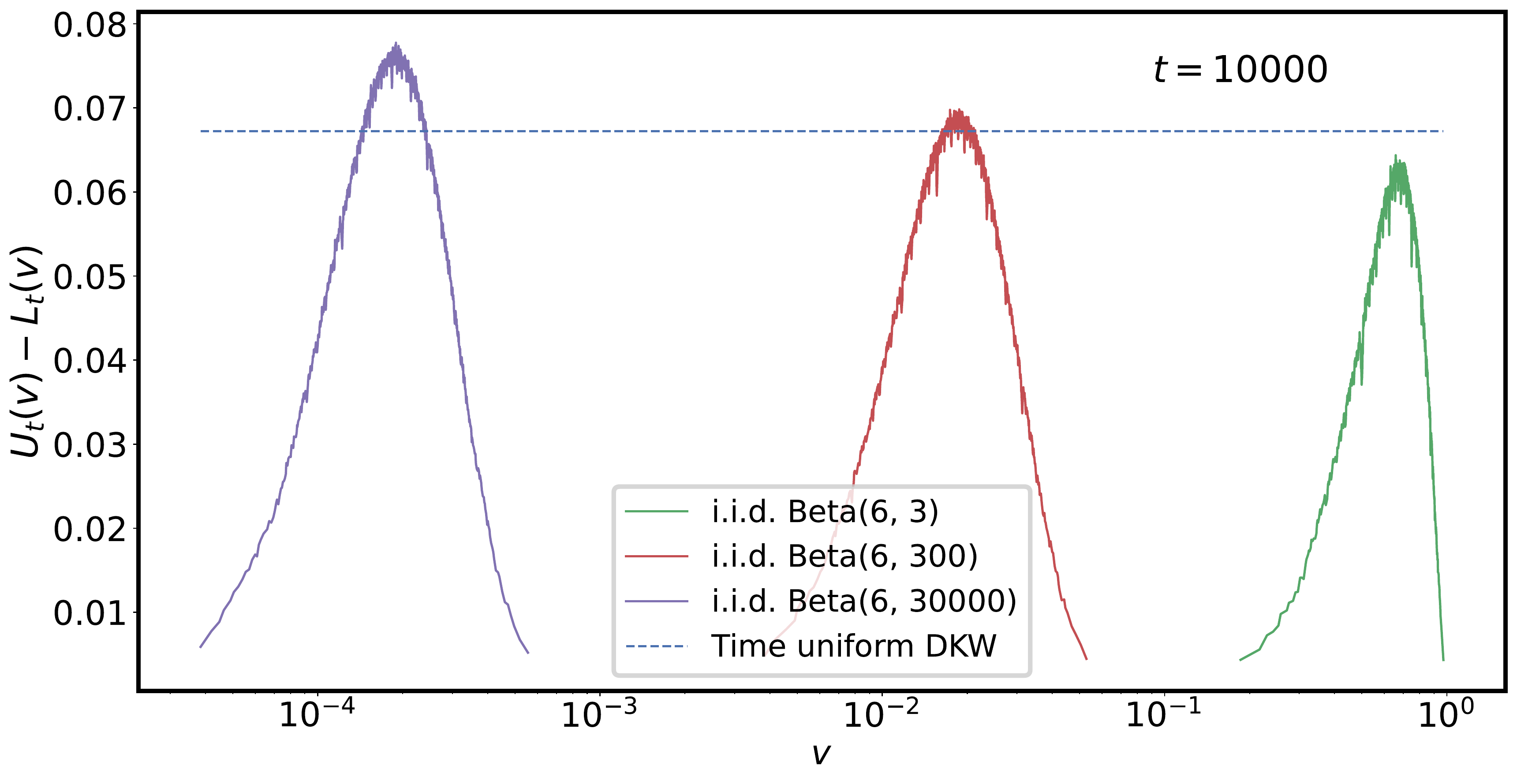}
  \vskip -12pt
  \caption{Comparison to naive time-uniform DKW (which is only valid in the i.i.d.\ setting) for Beta distributions of varying smoothness. Decreasing smoothness degrades our bound.}
  \label{fig:betawidth}
\end{minipage}
\vskip -0.2in
\end{figure*}

\begin{figure*}[p]
\centering
\begin{minipage}[t]{.49\textwidth}
  \vskip 0pt
  \centering
  \includegraphics[width=.96\linewidth]{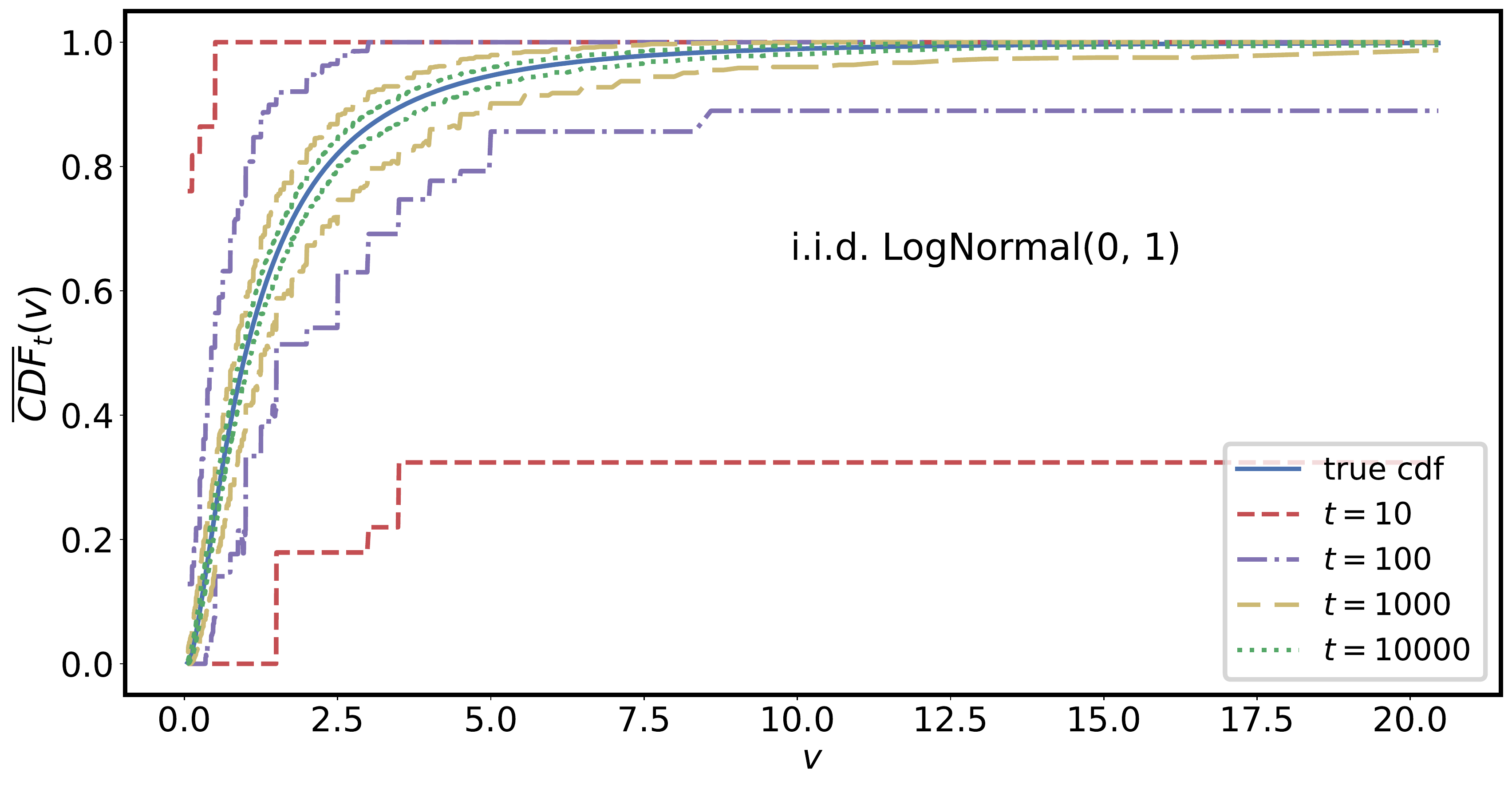}
  \vskip -12pt
  \caption{CDF bounds approaching the true CDF when sampling i.i.d.\ from a lognormal(0, 1) distribution. Recall these bounds
are simultaneously valid for all times and values.}
  \label{fig:lognormalcurves}
\end{minipage}
\hfill
\begin{minipage}[t]{.49\textwidth}
  \vskip 0pt
  \centering
  \includegraphics[width=.96\linewidth]{lognormalboundwidth.unbounded.pdf}
  \vskip -12pt
  \repeatcaption{fig:lognormalwidth}{\lognormalwidthcaption}
\end{minipage}
\vskip -0.2in
\end{figure*}

\begin{figure*}[p]
\centering
\begin{minipage}[t]{.49\textwidth}
  \vskip 0pt
  \centering
  \includegraphics[width=.96\linewidth]{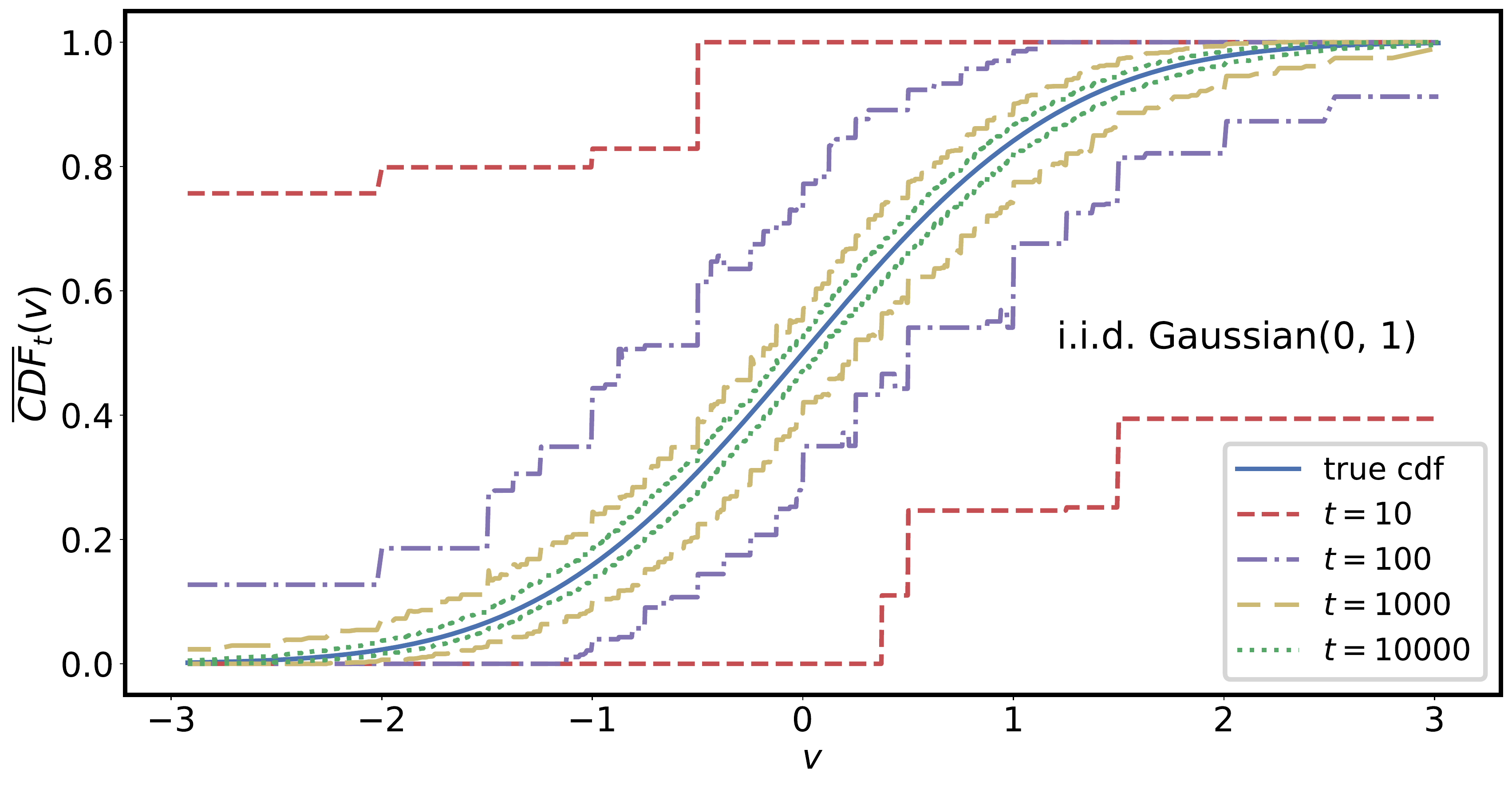}
  \vskip -12pt
  \caption{CDF bounds approaching the true CDF when sampling i.i.d.\ from a Gaussian(0, 1) distribution. Recall these bounds
are simultaneously valid for all times and values.}
  \label{fig:gaussiancurves}
\end{minipage}
\hfill
\begin{minipage}[t]{.49\textwidth}
  \vskip 0pt
  \centering
  \includegraphics[width=.96\linewidth]{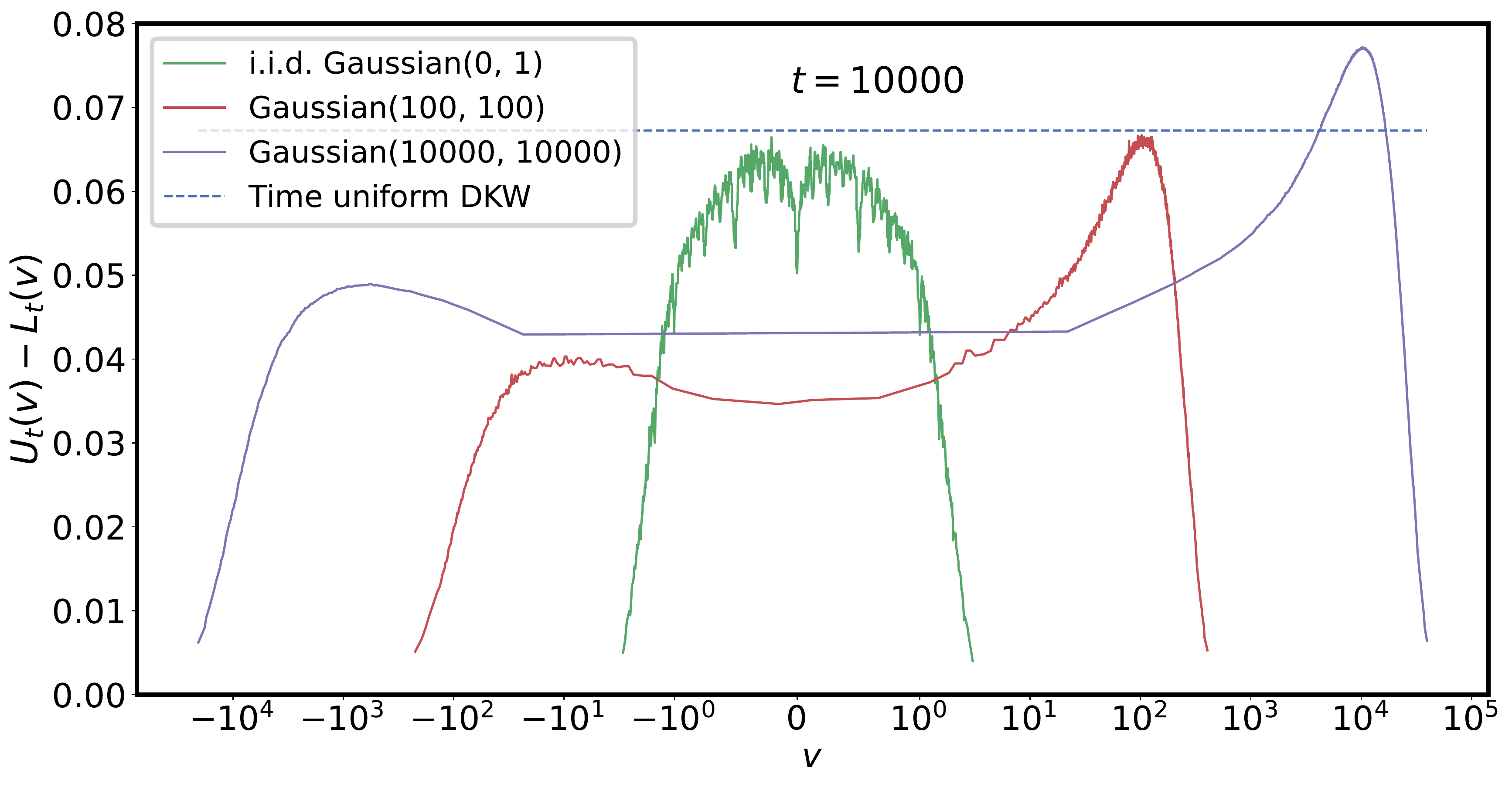}
  \vskip -12pt
  \caption{Demonstration of the variant described in \cref{subsec:extensions,app:arbitrarysupport} for distributions with arbitrary support, based on i.i.d.\ sampling from a variety of Gaussian distributions. Logarithmic range dependence is evident.}
  \label{fig:gaussianwidth}
\end{minipage}
\vskip -0.2in
\end{figure*}

\begin{figure*}[p]
\centering
\begin{minipage}[t]{.49\textwidth}
  \vskip 0pt
  \centering
  \includegraphics[width=.96\linewidth]{contpolyatwoseeds.pdf}
  \vskip -12pt
  \repeatcaption{fig:contpolyatwoseeds}{\polyacurvescaption}
\end{minipage}
\hfill
\begin{minipage}[t]{.49\textwidth}
  \vskip 0pt
  \centering
  \includegraphics[width=.96\linewidth]{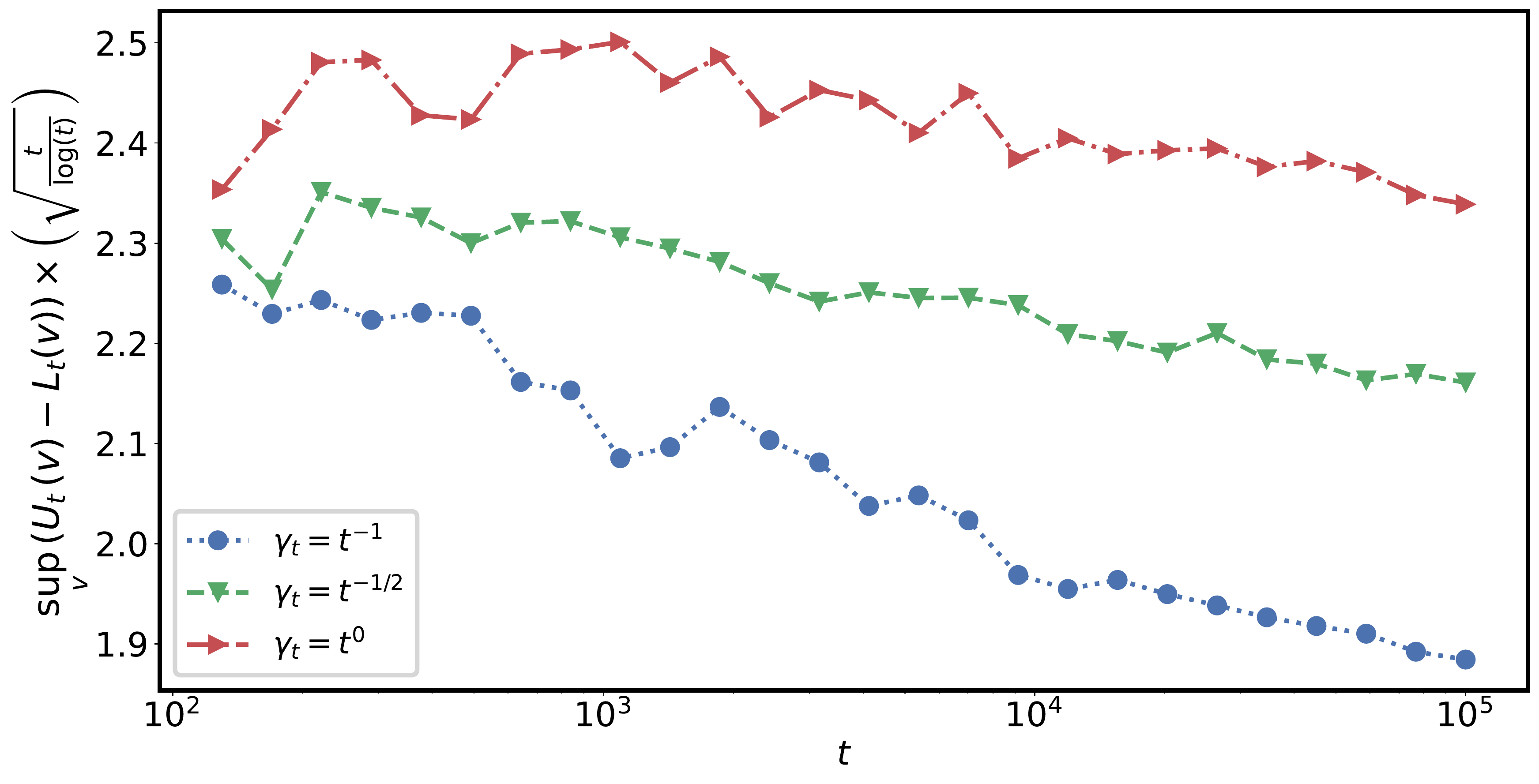}
  \vskip -12pt
  \caption{Maximum bound width, scaled by $\sqrt{\nicefrac{t}{\log(t)}}$
  to remove the primary trend, as a function of $t$, for nonstationary
  Polya simulations with different $\gamma_t$ schedules. See \cref{exp:nonstationary}}
  \label{fig:contpolyagammasweep}
\end{minipage}
\vskip -0.2in
\end{figure*}

\begin{figure*}[p]
\centering
\begin{minipage}[t]{.49\textwidth}
  \vskip 0pt
  \centering
  \includegraphics[width=.96\linewidth]{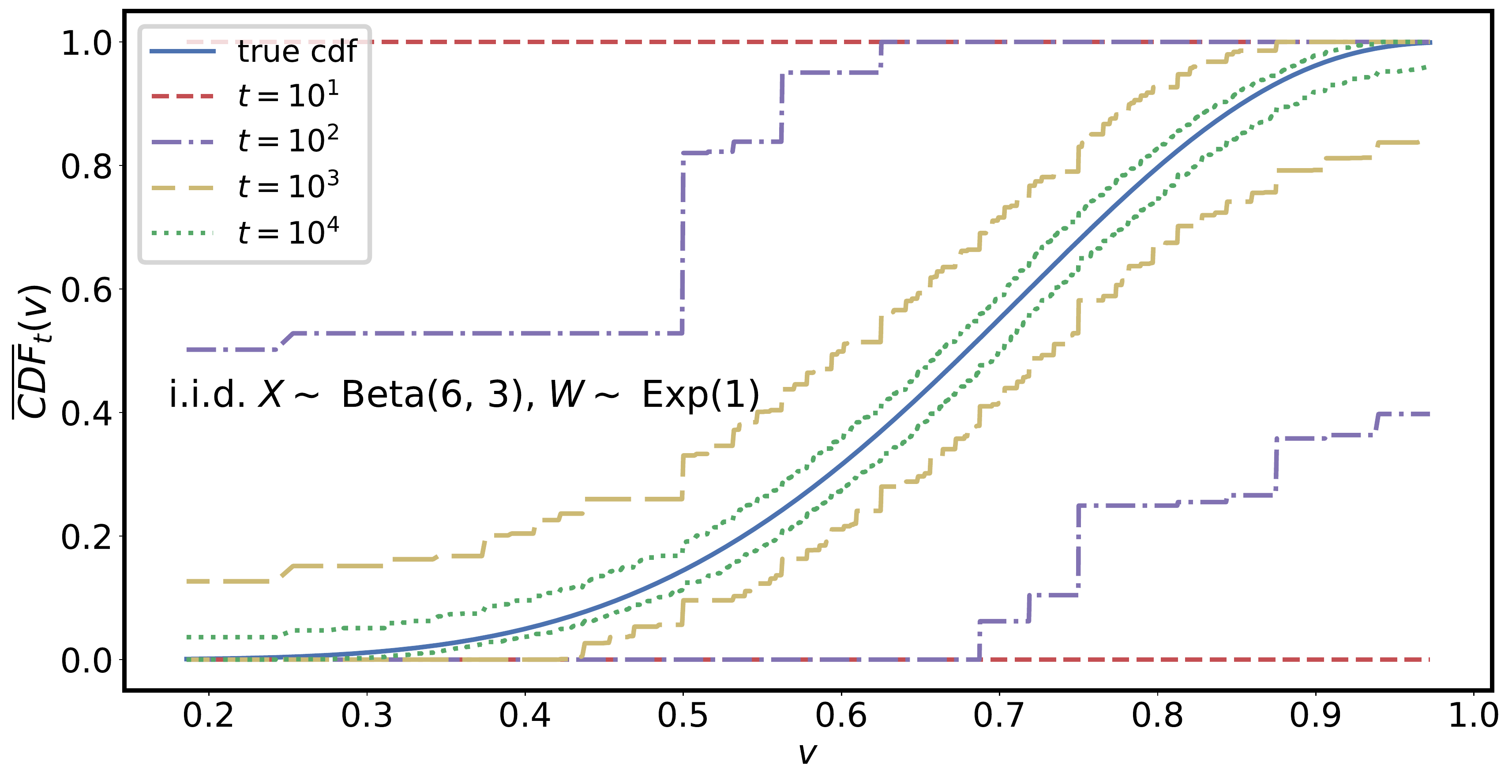}
  \vskip -12pt
  \caption{CDF bounds approaching the true counterfactual CDF when sampling i.i.d.\ from a Beta(6,3) with finite-variance importance weights, using DDRM for the oracle confidence sequence.}
  \label{fig:iwexpbetacurves}
\end{minipage}
\hfill
\begin{minipage}[t]{.49\textwidth}
  \vskip 0pt
  \centering
  \includegraphics[width=.96\linewidth]{betaparetoiwcurves.pdf}
  \vskip -12pt
  \repeatcaption{fig:iwexpparetocurves}{\iwbetaparetocurvescaption}
\end{minipage}
\vskip -0.2in
\end{figure*}

\begin{figure*}[p]
\centering
\begin{minipage}[t]{.49\textwidth}
  \vskip 0pt
  \centering
  \includegraphics[width=.96\linewidth]{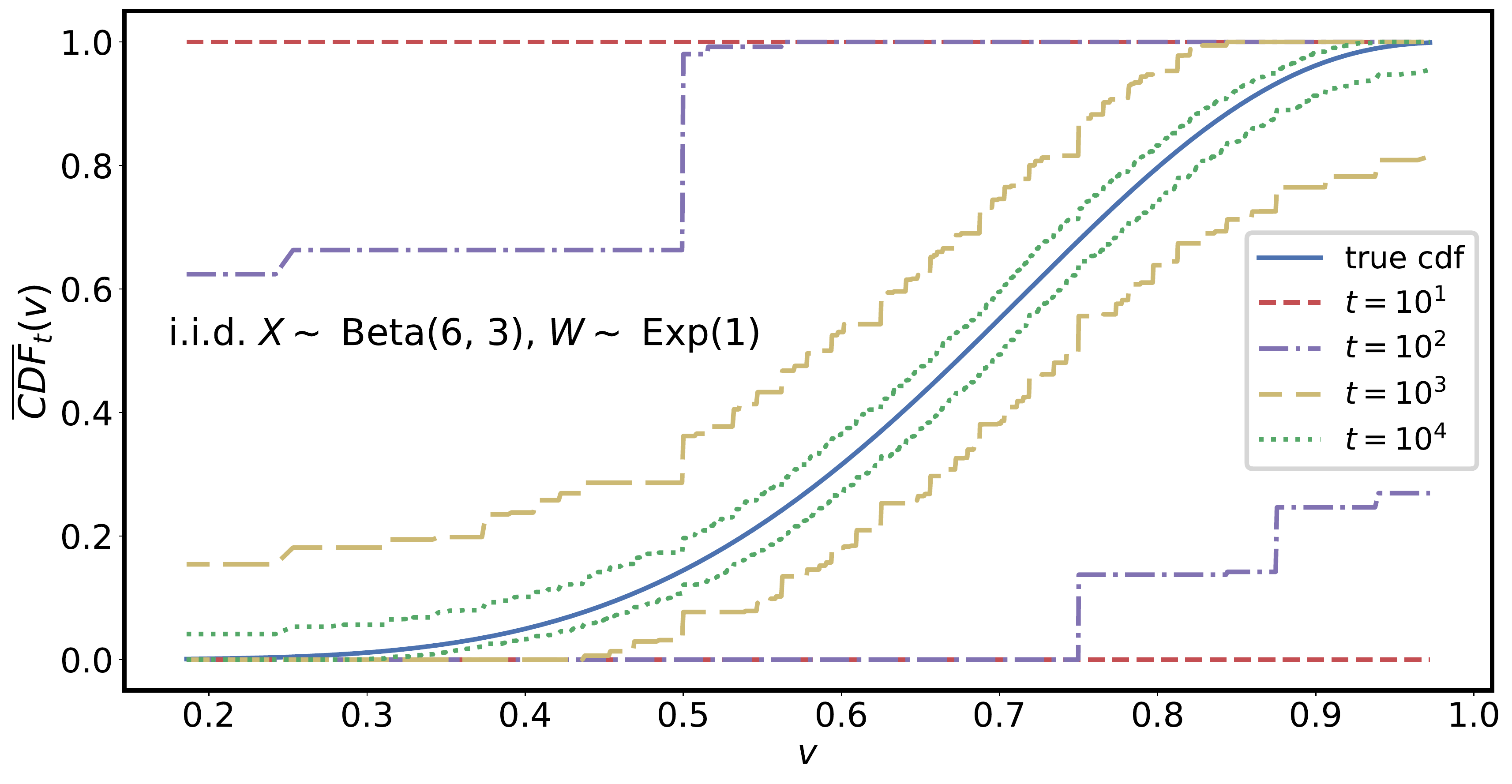}
  \vskip -12pt
  \caption{CDF bounds approaching the true counterfactual CDF when sampling i.i.d.\ from a Beta(6,3) with finite-variance importance weights, using Empirical Bernstein for the oracle confidence sequence.}
  \label{fig:iwexpbetacurvesbern}
\end{minipage}
\hfill
\begin{minipage}[t]{.49\textwidth}
  \vskip 0pt
  \centering
  \includegraphics[width=.96\linewidth]{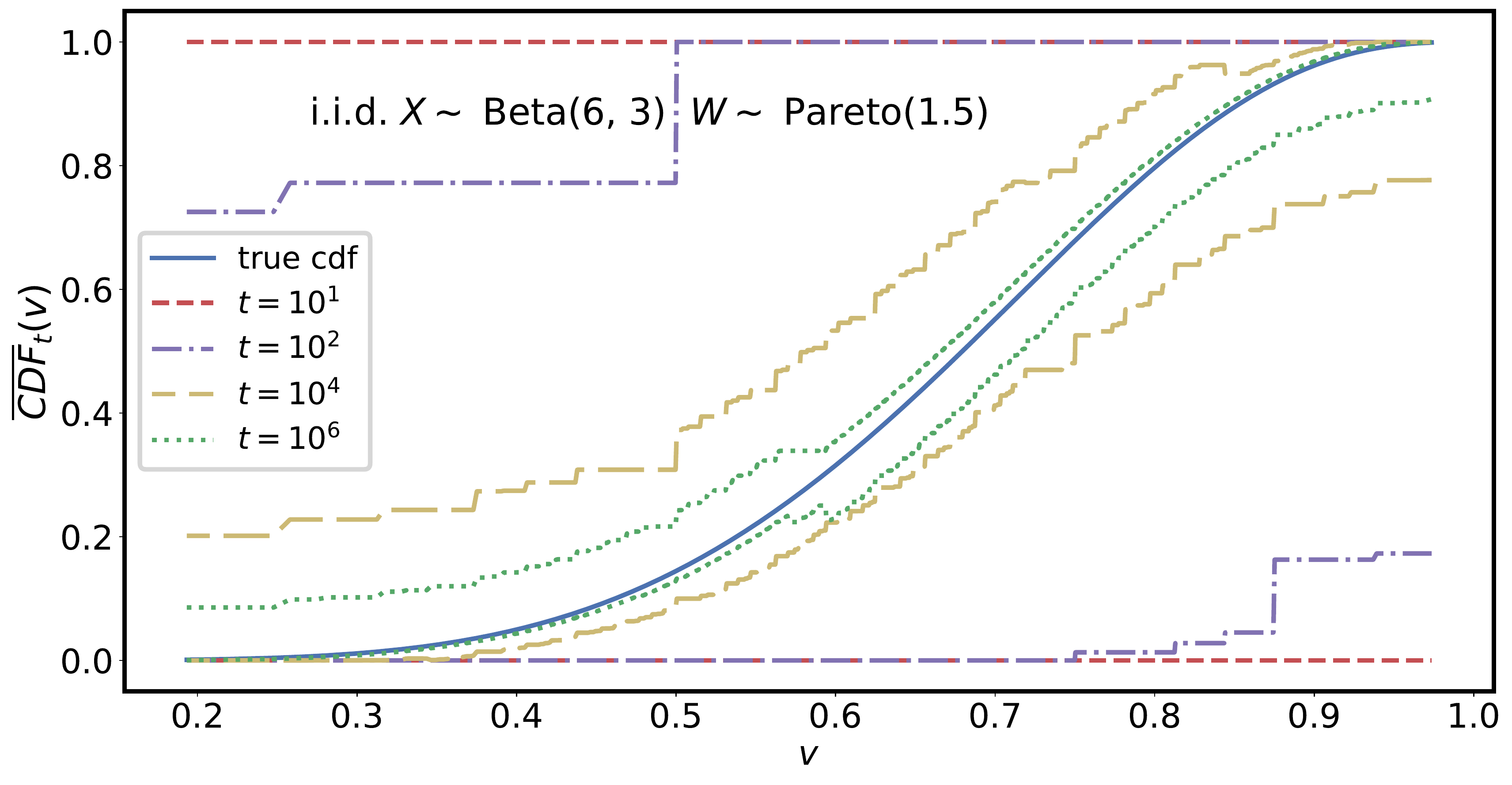}
  \vskip -12pt
  \caption{CDF bounds approaching the true counterfactual CDF when sampling i.i.d.\ from a Beta(6,3) with infinite-variance importance weights, using Empirical Bernstein for the oracle confidence sequence. Despite apparent convergence, eventually this simulation would reset the Empirical Bernstein oracle confidence sequence to trivial bounds.}
  \label{fig:iwexpparetocurvesbern}
\end{minipage}
\vskip -0.2in
\end{figure*}

\end{document}